\documentclass{article} %
\usepackage{iclr2025_conference,times}

\usepackage{hyperref}
\usepackage{url}

\usepackage{subcaption}
\usepackage{amsmath,amsfonts,amsthm,amssymb}
\usepackage{mdframed}
\usepackage{array}
\newcolumntype{x}[1]{>{\centering\let\newline\\\arraybackslash\hspace{0pt}}p{#1}}

\usepackage{graphicx}

\usepackage[thinc]{esdiff}
\usepackage[capitalise]{cleveref}

\usepackage{pifont}

\usepackage{datetime}
\usepackage{eso-pic}
\usepackage{fancyhdr}
\usepackage{enumitem}

\usepackage{listings}
\lstset{
basicstyle=\small\ttfamily,
columns=flexible,
breaklines=true,
breakindent=0pt,
belowskip=0pt
}

\newenvironment{mdframedtitle}[1]
  {\mdfsetup{
    frametitle={\colorbox{white}{\space \underline{#1} \space}},
    innertopmargin=-3pt,
    frametitleaboveskip=-\ht\strutbox,
    frametitlealignment=\center
    }
  \begin{mdframed}%
  }
{\end{mdframed}}

\usepackage{booktabs}
\theoremstyle{definition}
\newtheorem{theorem}{Theorem}
\newtheorem*{theorem*}{Theorem}
\newtheorem{definition}{Definition}

\newtheorem*{corollary*}{Corollary}
\newtheorem{lemma}{Lemma}
\newtheorem{assumption}{Assumption}

\newtheorem{proposition}{Proposition}
\newtheorem{example}{Example}
\newenvironment{proofsketch}{%
  \proof}{\endproof}

\def\Vec#1{{\boldsymbol{#1}}}

\newcommand{\dd}[2]{\frac{\text{d}#1}{\text{d}#2}}

\newcommand{\pd}[2]{\frac{\partial#1}{\partial#2}}
\newcommand{\pdl}[2]{{\partial#1}/{\partial#2}}

\newcommand{\deval}[3]{\left.\frac{\text{d}#1}{\text{d}#2}\right|_{#3}}

\newcommand{\natnum}{\mathbb{N}}

\renewcommand{\eqref}[1]{\mbox{Equation~(\ref{#1})}}

\usepackage[separate-uncertainty]{siunitx}
\sisetup{text-series-to-math=true, propagate-math-font=true}

\DeclareMathVersion{tablebold}
\SetSymbolFont{operators}{tablebold}{OT1}{cmr} {b}{n}
\SetSymbolFont{letters}  {tablebold}{OML}{cmm} {b}{it}
\SetSymbolFont{symbols}  {tablebold}{OMS}{cmsy}{b}{n}
\ExplSyntaxOn
\keys_set:nn { siunitx / series-version-mapping } { b = tablebold }
\ExplSyntaxOff

\newcolumntype{L}[1]{>{\raggedright\arraybackslash}p{#1}}
\newcolumntype{C}[1]{>{\centering\arraybackslash}p{#1}}
\newcolumntype{R}[1]{>{\raggedleft\arraybackslash}p{#1}}

\def\Vec#1{{\boldsymbol{#1}}}

\def\Set#1{{\mathcal{#1}}}

\def\ppr{p^{\text{\tiny(2)}}}
\def\pbt{p^{\text{\tiny BT}}}
\def\pbtbt#1{p^{\text{\tiny BT,{#1}}}}
\def\ppl#1{p^{{\scriptscriptstyle(K)}}_{#1}}

\title{Strong Preferences Affect the Robustness of \\Preference Models and Value Alignment}

\author{Ziwei Xu \\
Department of Computer Science\\
National University of Singapore\\
\texttt{ziwei.xu@u.nus.edu} \\
\And
Mohan Kankanhalli \\
Department of Computer Science \\
National University of Singapore \\
\texttt{mohan@comp.nus.edu.sg}
}

\iclrfinalcopy %
\begin{document}

\maketitle

\begin{abstract}
    Value alignment, which aims to ensure that large language models (LLMs) and other AI agents behave in accordance with human values, is critical for ensuring safety and trustworthiness of these systems.
    A key component of value alignment is the modeling of human preferences as a representation of human values.
    In this paper, we investigate the robustness of value alignment by examining the sensitivity of preference models.
    Specifically, we ask: how do changes in the probabilities of some preferences affect the predictions of these models for other preferences?
    To answer this question, we theoretically analyze the robustness of widely used preference models by examining their sensitivities to minor changes in preferences they model.
    Our findings reveal that, in the Bradley-Terry and the Placket-Luce model, the probability of a preference can change significantly as other preferences change, especially when these preferences are dominant (i.e., with probabilities near 0 or 1). 
    We identify specific conditions where this sensitivity becomes significant for these models and discuss the practical implications for the robustness and safety of value alignment in AI systems.
\end{abstract}

\section{Introduction}

Value alignment~\citep{Gabriel2021ChallengeVA} aims to ensure that AI agents, such as large language models (LLMs)~\citep{OpenAI2023GPT4TR, LlamaTeam2024Llama3}, behave in accordance with human values.
It is critical for ensuring safety and trustworthiness of AI systems. 
An important component of value alignment is the modeling of preferences, where preferences of individual or groups of people (e.g., citizens of a country) are collected as samples of choices made by the people over options given contexts of decisions and are then fit by probabilistic frameworks to predict preferences in unseen contexts.
For example, in the Reinforcement Learning from Human Feedback (RLHF) framework, alternative model outputs for the same prompt are shown to the human subjects in order to elicit their preferences.
A comprehensive set of such preferences are then used to train a reward/preference model, which is subsequently used to train a target agent via proximal policy optimization (PPO)~\citep{Schulman2017PPO}.
Recent value alignment approaches, such as direct policy optimization (DPO)~\citep{Rafailov2023DPO}, uses reparameterized implicit preference models in optimization.

One of the most widely used preference model in value alignment research is the Bradley-Terry model~\citep{Bradley1952BTModel}, which is the pairwise case for the more general Plackett-Luce model~\citep{Plackett1975AnalysisPerm,Luce1959IndividualCB}.
The study on these for value alignment has been focused on how to better fit a fixed target distribution of preferences~\citep{Rafailov2023DPO,Azar2024GeneralTPULHF,Xu2024ContrastivePO}.
However, there has been limited exploration of how changes in the probability of a modelled preference could influence others within these models.
As will be shown later in \cref{example:small_deviation_in_bt_model}, given options $\{o_i, o_j, o_k\}$, a Bradley-Terry model could assign $\pbt(o_i \succ o_j)$ very differently, when there is slight change in $\pbt(o_i \succ o_k)$ and $\pbt(o_k \succ o_j)$.
Understanding these relationships is crucial for assessing the robustness of preference models in dynamic or uncertain environments, particularly when observed preferences may shift due to noise, randomness in optimization, or evolving conditions in the dataset, and could impact the robustness and safety of value alignment for AI systems~\citep{Anwar2024FoundationalCAA}
This is especially relevant when preference models are sensitive to these shifts, as such sensitivity may lead to significant changes in the probabilities of other unseen preferences.

This paper explores the robustness of value alignment by examining the sensitivity in preference models.
We ask the core question: \emph{under common preference models, how does a change in the probabilities of some preferences influences the model's predictions for other preferences?}
To answer this question, we begin by outlining the definitions and assumptions of a preference model in \cref{sec:definitions}.
Next, we explore a general pairwise model in \cref{sec:general_pairwise} and show that the probability of any given preference can be expressed as a function of other preference probabilities.
By analyzing this function, we show that probabilities of pairwise preferences may exhibit sensitivity to changes in other preferences, particularly when those preferences approach dominance, which is marked by probabilities near 0 or 1. 
In \cref{sec:analysis_bt_model}, we then examine the Bradley-Terry model as a special pairwise preference model and identify scenarios where such sensitivity arises.
Lastly, we extend our discussion to the $K$-tuple Plackett-Luce model in \cref{sec:plackett_luce_model}.

The contributions of this paper are three-fold.
First, we present a theoretical analysis that reveals the relation between preference probabilities in general pairwise preference models and the $K$-tuple Plackett-Luce model.
Second, we show that in all the models studied in this paper, the probability of a given preference can exhibit sensitivity to changes in other preference probabilities as preferences approach dominance, potentially compromising robustness of value alignment processes.
Furthermore, we identify the exact conditions under which this sensitivity occurs for the Plackett-Luce model (and the Bradley-Terry model).
Third, we discuss the practical implications of our theoretical analysis for the robustness and safety of value alignment in AI systems.

\section{Analysis of Preference Models}\label{sec:analysis_of_preference_probs}

We consider the following value alignment setting in our analysis.
Suppose there is a group of human subjects who share certain values which we hope to align target AI agents with.
In a decision-making context, the values guide the evaluation of the strengths of candidate options, producing a ranking of these options.
A preference is thus a ranking over candidate options, denoted as $\Set{O} = \{o_i \ | \ i = 1,2,\ldots,N \}$, based on the their corresponding strengths, represented as scores $\Set{S} = \{s_i \ | \ s_i \in \mathbb{R}, i=1,2,\ldots,N \}$.
Since it is difficult to directly obtain the scores, we instead query the subjects' preferences and fit models to estimate the scores of the options. 
The model then captures the group's values by learning and predicting the probability of the subjects expressing any given preference.

\subsection{Definitions}\label{sec:definitions}

Formally, we define a $K$-tuple preference as a ranking of $K$ options:
\begin{definition}($K$-tuple preference.)
    A $K$-tuple preference is a ranking $ o_{\omega_1} \succ o_{\omega_2} \succ \ldots \succ o_{\omega_{K-1}} \succ o_{\omega_K} $ over $K$ items in $\Set{O}$.
    It can equivalently expressed as a $K$-permutation of $\Set{O}$, or $\Vec{\omega} = \left( o_{\omega_1}, o_{\omega_2}, \ldots, o_{\omega_{K-1}}, o_{\omega_K} \right)$, where $1 \leq \omega_* \leq N$.
\end{definition}

The probability of a particular preference $\Vec{\omega}$ (being expressed by a certain group of subjects), denoted $p_{\Vec{\omega}}$, is predicted by a preference model.
\begin{definition}($K$-tuple preference model.)
    A $K$-tuple preference model is a function $f$ that predicts the probability of a given $K$-tuple preference.
    Formally, $p_{\Vec{\omega}} = f(\Vec{\omega})$, where $f: \text{Perm}(\Set{O},K) \rightarrow (0, 1)$, $\text{Perm}(\Set{O},K)$ represents the set of all $K$-permutations of set $\Set{O}$, and $\Vec{\omega} \in \text{Perm}(\Set{O}, K)$.
\end{definition}
We will use superscripts to indicate the specific preference model, and subscripts are used to denote the corresponding preference.
For example, $\pbt_{ij}$ refers to the probability of $(o_i, o_j)$ under the Bradley-Terry model, and $\ppl{\Vec{\omega}}$ to the probability of $\Vec{\omega}$ under a $K$-tuple Plackett-Luce model.

\begin{assumption}(Preference models depend only on score differences.)\label{assm:preference_depend_on_differences}
    Following assumptions in traditional preference models~\citep{Thurstone1927LawComparativeJudgemen,Bradley1952BTModel,Plackett1975AnalysisPerm,Luce1959IndividualCB}, we assume that a preference model depends only on the differences between the scores of the options.
    Formally, $f(\Vec{\omega}) = g(s_{\omega_1}-s_{\omega_2}, s_{\omega_1}-s_{\omega_3}, \ldots, s_{\omega_1}-s_{\omega_K}, s_{\omega_2}-s_{\omega_3}, \ldots, s_{\omega_{K-1}}-s_{\omega_K})$, where $s_{\omega_i}$ denotes the score of option $\omega_i$.
\end{assumption}

We study the robustness of value alignment by analyzing the sensitivity of the probability of a preference w.r.t. other preferences.
Intuitively, higher sensitivity means lower robustness because in this case the probability of a particular preference will fluctuate significantly with minor changes in the probability of other preferences. 
Below we define a function's sensitivity as the rate of change w.r.t. its arguments.
Specifically, we use $M$-sensitivity to describe situations where the rate of change exceeds $M$.
\begin{definition}($M$-sensitivity.)\label{def:m_sensitivity}
    Let $h$ be a function defined over $\Vec{x}=(x_1, x_2, \ldots, x_L)$, where $L \in \natnum$ is the number of arguments of $h$.
    Then $h(\Vec{x})$ is $M$-sensitive to $x_i$ at point $\Vec{x}'$ if $\left| \left.\pdl{h(\Vec{x})}{x_i}\right|_{\Vec{x}=\Vec{x}'} \right|>M$ and $1 \leq i \leq L$, where $M>0$.
\end{definition}
We are also interested in regions where $h$ is $M$-sensitive, which is defined as follows.
\begin{definition}($M$-sensitive region.)\label{def:m_sensitive_region}
    Let $\text{Dom}(h)$ represent the domain of $h(\Vec{x})$. The $M$-sensitive region of $h(\Vec{x})$ w.r.t. $x_i$ is $\Omega_M(h, x_i): \left\{ \Vec{x}' \in \text{Dom}(h): \ \left| \left.\pdl{h}{x_i}\right|_{\Vec{x}=\Vec{x}'} \right| >M \right\}$.
\end{definition}
Intuitively, we use $A\left(\Omega_M(h, x_i)\right) = \int_{\Omega_M(h, x_i)} 1 \text{d} \Vec{x}$, the area of the region defined by $\Omega_M(h, x_i)$, to quantify the extent to which $h(x)$ is affect by its $M$-sensitivity w.r.t. $x_i$.
In the following sections, we will focus on $M$-sensitivity with $M>1$, where the function changes faster than its arguments.

\subsection{Analysis of a General Pairwise Preference Model}\label{sec:general_pairwise}

This section discusses the probability of a pair of options $(o_i,o_j)$ as $\ppr_{ij} = g(s_{i}-s_{j})$, where $g$ is a preference model that predicts the probability $p_{ij}$ of the ordered pair (i.e., 2-tuple) $(o_i, o_j)$ based on the score difference $s_i - s_j \in \mathbb{R}$.
A common example is the Bradley-Terry model $\pbt_{ij} = 1/\left( 1+\exp\left( -(s_i-s_j) \right) \right)$.
Without restricting the discussion to any specific preference model, we assume that $g(s_{i}-s_{j})$ satisfies the following properties:
\begin{assumption}(Strictly increasing.)\label{assm:pairwise_increasing}
    $g$ is a strictly increasing function. 
    Therefore, $\ppr_{ij}$ grows with $s_i - s_j$. 
    In other words, $o_i$ is more preferred over $o_j$ whenever $s_i-s_j$ grows.
\end{assumption}
\begin{assumption}(Limits at infinity.)\label{assm:bounded_at_infinity}
    $\lim_{x \rightarrow -\infty} g(x) = 0$ and $\lim_{x \rightarrow +\infty} g(x) = 1$. 
    Therefore, $\ppr_{ij}$ is bounded within $(0,1)$, and when $(s_i-s_j)$ goes to positive (negative) infinity, $\ppr_{ij} = 1$ (or $0$).
\end{assumption}
\begin{assumption}(Symmetry.)\label{assm:pairwise_symmetry}
    $\forall x \in \mathbb{R}, g(x)+g(-x)=1$. 
    Therefore, $\forall o_i, o_j \in \Set{O}, \ppr_{ij} + \ppr_{ji} = g(s_i-s_j) + g(s_j-s_i) = 1$, meaning a higher preference for $o_i$ implies a lower preference for $o_j$.
\end{assumption}
\begin{assumption}(Continuous differentiability.)\label{assm:cont_differentiable}
    $g(x)$ is continuously differentiable. 
    Therefore, $g$ and its derivative are reasonably smooth.
\end{assumption}

The following lemma says that, in a pairwise preference model, the probability of any given preference can be expressed as a function of two other preference probabilities.
\begin{lemma}\label{lemma:unknown_is_a_function_of_known}
    For all $o_i, o_j, o_k \in \Set{O}$, and under the pairwise model $\ppr_{ij} = g(s_i-s_j)$ following assumptions above, 
    \begin{equation}\label{eq:unknown_is_a_function_of_known}
        \ppr_{ij} = g \left( g^{-1}(\ppr_{ik})+g^{-1}(\ppr_{kj}) \right),
    \end{equation}
    where $g^{-1}: (0,1) \rightarrow \mathbb{R}$ is the inverse of $g$, mapping a probability to a difference of scores.
\end{lemma}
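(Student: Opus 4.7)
The approach is a direct telescoping on score differences. The key observation is that for any three options $o_i, o_j, o_k$ with scores $s_i, s_j, s_k$, the difference $s_i - s_j$ admits the trivial decomposition $s_i - s_j = (s_i - s_k) + (s_k - s_j)$. Under the pairwise model, each of the three probabilities $\ppr_{ij}, \ppr_{ik}, \ppr_{kj}$ is just $g$ applied to one of these score differences, so the claim should reduce to applying $g^{-1}$ to peel off the $g$, adding the differences, and applying $g$ again.

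First I would verify that $g$ is invertible on its relevant domain. By Assumption~\ref{assm:pairwise_increasing}, $g$ is strictly increasing; by Assumption~\ref{assm:cont_differentiable}, it is continuous; and by Assumption~\ref{assm:bounded_at_infinity}, its limits at $\pm\infty$ are $0$ and $1$. The intermediate value theorem then gives that $g$ is a continuous strictly increasing bijection from $\mathbb{R}$ onto $(0,1)$, so $g^{-1}: (0,1) \to \mathbb{R}$ is well defined and satisfies $g^{-1}(g(x)) = x$ for every $x \in \mathbb{R}$.

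Next I would invoke the model definition $\ppr_{ab} = g(s_a - s_b)$ at the pairs $(i,k)$ and $(k,j)$, apply $g^{-1}$ to obtain $g^{-1}(\ppr_{ik}) = s_i - s_k$ and $g^{-1}(\ppr_{kj}) = s_k - s_j$, add these two identities to get $g^{-1}(\ppr_{ik}) + g^{-1}(\ppr_{kj}) = s_i - s_j$, and finally apply $g$ on both sides to conclude $g(g^{-1}(\ppr_{ik}) + g^{-1}(\ppr_{kj})) = g(s_i - s_j) = \ppr_{ij}$, which is exactly \eqref{eq:unknown_is_a_function_of_known}.

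There is essentially no hard step here; the only non-trivial matter is the justification that $g^{-1}$ exists as a function on $(0,1)$, which requires combining three of the stated assumptions (strict monotonicity, continuity, and the $\pm\infty$ limits). The symmetry Assumption~\ref{assm:pairwise_symmetry} is not needed for this particular lemma, although it will be relevant in subsequent results. I would keep the proof short and emphasize that the lemma is really just the telescoping identity for score differences written in probability coordinates via the change of variables $x \leftrightarrow g(x)$.
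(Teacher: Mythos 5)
Your proof is correct and follows exactly the same route as the paper's: the telescoping decomposition $s_i - s_j = (s_i - s_k) + (s_k - s_j)$ combined with $g^{-1}(\ppr_{ik}) = s_i - s_k$ and $g^{-1}(\ppr_{kj}) = s_k - s_j$. The only difference is that you explicitly justify the existence of $g^{-1}$ from the monotonicity, continuity, and limit assumptions, which the paper leaves implicit; this is a reasonable bit of added rigor but not a different argument.
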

\begin{proof}
    $\ppr_{ij} = g(s_i - s_j) = g\left( (s_i - s_k) + (s_k - s_j) \right) = g \left( g^{-1}(\ppr_{ik}) + g^{-1}(\ppr_{kj}) \right).$
\end{proof}
\cref{lemma:unknown_is_a_function_of_known} is the basis of analysis below as we study $\ppr_{ij}$ as a function of $\ppr_{ik}$ and $\ppr_{kj}$. 
It suggests that under a pairwise preference model, changes in some probabilities (e.g., $\ppr_{ik}$ and/or $\ppr_{kj}$) leads to changes in other probabilities (e.g., $\ppr_{ij}$).
We now study how significant such changes could be by focusing on the sensitivity of $\ppr_{ij}$ to $\ppr_{ik}$ and $\ppr_{kj}$.
Towards this, we first examine $g'$ and $g^{-1}$, the derivative and the inverse of $g$, as they are the key components of \cref{eq:unknown_is_a_function_of_known}.

We first consider $g'$ and show that it must approach $0$ at infinity.
\begin{lemma}[]\label{lemma:zero_deriv_at_infinity}
    Let $g'(x) = \dd{g(x)}{x}$, then $ \lim_{x \rightarrow -\infty} g'(x) = \lim_{x \rightarrow +\infty} g'(x) = 0$.
\end{lemma}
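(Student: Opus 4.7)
The plan is to reduce to the right tail by symmetry, observe that $g'$ is continuous, non-negative, and integrable on $[0,\infty)$, and then upgrade a routine liminf-to-zero statement into a full limit.

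First, I would differentiate the symmetry identity $g(x)+g(-x)=1$ from Assumption 4 to obtain $g'(x)=g'(-x)$; thus $g'$ is even, and it suffices to prove $\lim_{x\to+\infty} g'(x)=0$. Assumption 2 gives $g'\ge 0$, while Assumption 3 together with Assumption 5 and the fundamental theorem of calculus yields
\[
\int_0^\infty g'(t)\,dt \;=\; \lim_{x\to\infty}g(x) - g(0) \;=\; 1 - g(0) \;<\; \infty.
\]
A direct mean value argument on $[n,n+1]$ then produces $\xi_n\in(n,n+1)$ with $g'(\xi_n)=g(n+1)-g(n)\to 0$, establishing at least $\liminf_{x\to+\infty} g'(x) = 0$ for free.

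The main obstacle is upgrading this $\liminf$ to a genuine limit, since integrability of a non-negative continuous function on a half-line does not by itself force pointwise decay: arbitrarily narrow spikes of bounded height are compatible with a finite integral. To close this gap I would argue by contradiction. Suppose $\limsup_{x\to+\infty} g'(x) \ge 2\varepsilon > 0$; pick $x_n\to\infty$ with $g'(x_n)\ge 2\varepsilon$, and use continuity of $g'$ to inflate each $x_n$ into an open interval $I_n$ on which $g' \ge \varepsilon$. A uniform positive lower bound on $|I_n|$ would force $\int_0^\infty g' = \infty$, contradicting the display above. Obtaining this uniform width bound is the delicate step; I would secure it via a Barbalat-style appeal to uniform continuity of $g'$ on $[0,\infty)$, which holds automatically for the smooth parametric models used in later sections (e.g.\ the Bradley-Terry choice $g(x)=1/(1+e^{-x})$, whose derivative is the logistic density). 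Once the $+\infty$ case is settled, the evenness of $g'$ from step one immediately returns the $-\infty$ limit and completes the proof.
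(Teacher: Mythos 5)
Your reduction is sound as far as it goes: differentiating the symmetry identity of \cref{assm:pairwise_symmetry} to conclude that $g'$ is even, computing $\int_0^\infty g'(t)\,\text{d}t = 1 - g(0) = 1/2 < \infty$, and extracting $\liminf_{x\to+\infty} g'(x)=0$ via the mean value theorem are all correct, and you have put your finger on exactly the right obstruction. But the step you defer --- the uniform lower bound on the widths of the intervals $I_n$, i.e.\ uniform continuity of $g'$ on $[0,\infty)$ --- is not merely delicate; it cannot be derived from \cref{assm:pairwise_increasing}-\ref{assm:cont_differentiable} at all. Take $g'$ to be the logistic density plus an even sum of nonnegative $C^\infty$ bumps of fixed height and width $4^{-n}$ centred at $x=\pm n$, rescaled so the total integral is $1$, and let $g$ be its antiderivative vanishing at $-\infty$: this $g$ is strictly increasing, continuously differentiable, symmetric, and has the correct limits, yet $\limsup_{x\to+\infty} g'(x)>0$. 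So your argument cannot be completed as a proof of the lemma in its stated generality; it becomes a valid proof (via Barbalat's lemma) only once uniform continuity of $g'$ is promoted to an explicit hypothesis, which is what you are tacitly doing when you verify it only for the parametric models such as $g(x)=1/(1+e^{-x})$.

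For what it is worth, the paper's own proof founders on the same rock without acknowledging it: pointwise continuity of $g'$ at $x_0$ yields a $\delta_0$ that depends on $x_0$, and the concluding inequality $\lim_{x_0\to+\infty} g(x_0+\delta_0)\ge 1+\delta_0\epsilon_0/2$ only contradicts \cref{assm:bounded_at_infinity} if $\delta_0$ stays bounded away from zero along the sequence --- precisely the uniformity that is missing. Your write-up is therefore the more careful of the two, since it isolates the exact missing ingredient. The clean repair is either to strengthen \cref{assm:cont_differentiable} (e.g.\ require $g'$ uniformly continuous, or $g'$ monotone outside a compact set), after which your Barbalat route goes through and also handles $x\to-\infty$ for free via the evenness of $g'$, or else to weaken the lemma to the $\liminf$ statement you already obtained unconditionally and adjust the quantifiers in \cref{thm:general_pairwise_unstable} accordingly.
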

\begin{proof}
    We prove $\lim_{x \rightarrow +\infty} g'(x) = 0$ by contradiction.
    Proof for $\lim_{x \rightarrow -\infty} g'(x) = 0$ is similar.
    
    Suppose $\lim_{x \rightarrow +\infty} g'(x) \neq 0$.
    Note that by \cref{assm:pairwise_increasing}, for all $x\in\mathbb{R}, g'(x) \geq 0$. 
    Then $\exists \varepsilon_0 > 0$ such that $\forall M>0$, $\exists x_0 > M$ with $g'(x_0) \geq \varepsilon_0$.
    Now consider the neighborhood of $x_0$.
    By \cref{assm:cont_differentiable}, $g'(x)$ is a continuous function.
    Since $g'(x_0) \geq \varepsilon_0$, there exists $\delta_0>0$ such that for all $x\in\left[x_0,x_0+\delta\right]$, $g'(x) \geq \varepsilon_0/2$.
    As a result, $g(x_0+\delta_0) - g(x_0) = \int_{x_0}^{x_0+\delta_0} g'(x) \text{d} x \geq \delta_0\frac{\epsilon_0}{2}$, hence $g(x_0+\delta_0) \geq g(x_0) + \delta_0\frac{\epsilon_0}{2}$.

    Now take $x_0 \rightarrow +\infty$ (as $M$ goes to infinity).
    By \cref{assm:bounded_at_infinity}, $\lim_{x\rightarrow+\infty} g(x) = 1$.
    Therefore $\lim_{x_0\rightarrow+\infty} g(x_0+\delta_0) \geq 1 + \delta_0\frac{\epsilon_0}{2}$.
    This implies $\lim_{x\rightarrow+\infty} g(x) > 1$ and contradicts with \cref{assm:bounded_at_infinity}.
    Therefore the converse of our assumption, i.e., $\lim_{x \rightarrow +\infty} g'(x) = 0$, must be true.
\end{proof}

We further examine $g'$ and show the following two lemmas about its monotonicity and symmetry.
\begin{lemma}\label{lemma:inverse_also_good}
    Let $g^{-1}$ be the inverse of $g$, then $g^{-1}(x)$ is strictly increasing.
\end{lemma}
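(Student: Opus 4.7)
The plan is to note that this is essentially the standard fact that the inverse of a strictly increasing bijection is strictly increasing, and to carry it out directly using the assumptions already in place on $g$.

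First, I would verify that $g^{-1}:(0,1)\to\mathbb{R}$ is genuinely well-defined as a function. By \cref{assm:pairwise_increasing}, $g$ is strictly increasing on $\mathbb{R}$, so it is injective. By \cref{assm:cont_differentiable}, $g$ is continuous, and by \cref{assm:bounded_at_infinity}, $\lim_{x\to-\infty}g(x)=0$ and $\lim_{x\to+\infty}g(x)=1$, so the intermediate value theorem shows that the image of $g$ is exactly $(0,1)$. Combined with injectivity, this gives that $g:\mathbb{R}\to(0,1)$ is a bijection, hence $g^{-1}:(0,1)\to\mathbb{R}$ exists and is uniquely determined by the relation $g(g^{-1}(y))=y$.

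Next I would establish strict monotonicity of $g^{-1}$ by contraposition. Take any $y_1,y_2\in(0,1)$ with $y_1<y_2$, and set $x_i=g^{-1}(y_i)$ for $i=1,2$. Suppose, for contradiction, that $x_1\geq x_2$. Then by \cref{assm:pairwise_increasing} applied to $g$, we have $g(x_1)\geq g(x_2)$, i.e., $y_1\geq y_2$, contradicting $y_1<y_2$. Hence $x_1<x_2$, which is exactly $g^{-1}(y_1)<g^{-1}(y_2)$, so $g^{-1}$ is strictly increasing.

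There is really no hard part here; the whole statement follows from the assumptions already listed and an IVT-plus-contraposition argument. The only care needed is to spell out that the domain of $g^{-1}$ is all of $(0,1)$, since the subsequent use of \cref{eq:unknown_is_a_function_of_known} requires $g^{-1}$ to be applied to arbitrary probabilities in $(0,1)$; I would make sure that the surjectivity step via IVT is included, rather than implicitly assumed.
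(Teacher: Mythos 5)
Your proof is correct and uses essentially the same argument as the paper: strict monotonicity of $g$ transfers directly to $g^{-1}$, which you phrase by contraposition where the paper writes a chain of equivalences. Your additional verification via the intermediate value theorem that $g^{-1}$ is defined on all of $(0,1)$ is a worthwhile bit of extra rigor that the paper leaves implicit, but it does not change the substance of the argument.
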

\begin{proof}
    Let $u=g(x_1)$ and $v=g(x_2)$, where $x_1 < x_2$.
    Then $u<v \Leftrightarrow g(x_1)<g(x_2) \Leftrightarrow x_1<x_2 \Leftrightarrow g^{-1}(u) < g^{-1}(v)$.
    Therefore, $g^{-1}$ is strictly increasing.
\end{proof}

\begin{lemma}\label{lemma:inverse_symmetry_around_one}
    $\forall 0<x<1, g^{-1}(x) + g^{-1}(1-x) = 0$.
\end{lemma}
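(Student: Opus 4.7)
The plan is to derive the claim directly from the symmetry assumption on $g$ (\cref{assm:pairwise_symmetry}), using the fact that $g^{-1}$ is a well-defined function on $(0,1)$. First, I would verify that $g^{-1}$ makes sense on $(0,1)$: by \cref{assm:pairwise_increasing} $g$ is strictly increasing, by \cref{assm:cont_differentiable} it is continuous, and by \cref{assm:bounded_at_infinity} its limits are $0$ and $1$, so $g:\mathbb{R} \to (0,1)$ is a bijection and $g^{-1}:(0,1)\to\mathbb{R}$ exists.

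Given $x \in (0,1)$, I would set $y = g^{-1}(x)$, so that $g(y) = x$. Applying the symmetry identity $g(y)+g(-y)=1$ from \cref{assm:pairwise_symmetry} yields $g(-y) = 1-x$. Since $0 < 1-x < 1$ is also in the domain of $g^{-1}$, I may apply $g^{-1}$ to both sides to obtain $-y = g^{-1}(1-x)$. Adding $y = g^{-1}(x)$ to both sides gives $g^{-1}(x) + g^{-1}(1-x) = 0$, which is the desired identity.

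There is essentially no obstacle here: the lemma is an immediate rephrasing of \cref{assm:pairwise_symmetry} in terms of the inverse function, and the only thing to be careful about is confirming that the inverse is well-defined on the full interval $(0,1)$ so that both $g^{-1}(x)$ and $g^{-1}(1-x)$ exist whenever $0<x<1$. No integration, limit, or contradiction argument is needed.
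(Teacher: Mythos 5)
Your proof is correct and follows essentially the same route as the paper's: set $y=g^{-1}(x)$, apply the symmetry identity $g(y)+g(-y)=1$ to get $g(-y)=1-x$, and invert (the paper phrases the last step via injectivity of $g$ rather than by applying $g^{-1}$, which is the same thing). Your additional remark that $g$ is a bijection onto $(0,1)$, so that $g^{-1}$ is defined at both $x$ and $1-x$, is a small point the paper leaves implicit.
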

\begin{proof}
    Let $u = g^{-1}(x), v=g^{-1}(1-x)$.
    Then $g(u) = x, g(v) = 1-x$.
    By \cref{assm:pairwise_increasing} $g(x)+g(-x)=1$. 
    Substituting $x$ with $u$, we get $g(u)+g(-u)=1 \Leftrightarrow x+g(-u)=1 \Leftrightarrow g(-u)= 1-x = g(v)$.
    Since $g$ is strictly increasing by \cref{assm:pairwise_increasing}, $g(-u) = g(v) \Leftrightarrow -u = v \Leftrightarrow u+v = 0$. 
    Therefore, $g^{-1}(x) + g^{-1}(1-x) = u + v = 0$.
\end{proof}

With all the lemmas above, we are ready to discuss the $M$-sensitivity of $\ppr_{ij}$ w.r.t. $\ppr_{ik}$.
In fact, we can show that such sensitivity exists for all $M>0$, as follows.
\begin{theorem}\label{thm:general_pairwise_unstable}
    For all $M>0$, there exists $0 < p_0, \ppr_{kj} < 1$ such that for all $p_0 < \ppr_{ik} < 1$, $\ppr_{ij}$ is $M$-sensitive to $\ppr_{ik}$.
\end{theorem}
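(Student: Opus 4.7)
Starting from the closed form in \cref{lemma:unknown_is_a_function_of_known}, I would differentiate $\ppr_{ij} = g\bigl(g^{-1}(\ppr_{ik}) + g^{-1}(\ppr_{kj})\bigr)$ with respect to $\ppr_{ik}$. Combining the chain rule with the inverse-function identity $(g^{-1})'(p) = 1/g'\bigl(g^{-1}(p)\bigr)$ yields
\begin{equation*}
  \frac{\partial \ppr_{ij}}{\partial \ppr_{ik}} = \frac{g'\bigl(g^{-1}(\ppr_{ik}) + g^{-1}(\ppr_{kj})\bigr)}{g'\bigl(g^{-1}(\ppr_{ik})\bigr)}.
\end{equation*}
Writing $x = g^{-1}(\ppr_{ik})$ and $y = g^{-1}(\ppr_{kj})$, the $M$-sensitivity condition reduces to selecting a fixed $y$ and a threshold $X_0 = g^{-1}(p_0)$ so that $g'(x+y)/g'(x) > M$ for every $x > X_0$.

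The denominator is easy to drive to zero: by \cref{lemma:inverse_also_good}, $\ppr_{ik}\to 1^-$ corresponds to $x\to +\infty$, and \cref{lemma:zero_deriv_at_infinity} then forces $g'(x)\to 0$, so $1/g'(x)\to +\infty$. To keep the numerator from vanishing at the same rate, I would take $\ppr_{kj}$ close to $0$, which via \cref{lemma:inverse_symmetry_around_one} makes $y$ a large negative number; the shifted argument $x+y$ then trails $x$ by $|y|$, and since $g'$ is even (a direct consequence of differentiating the identity in \cref{assm:pairwise_symmetry}) and concentrates its mass near $0$, $g'(x+y)$ sits in a higher-density region than $g'(x)$. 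The calibration would proceed in two stages: first pick $|y|$ large enough, in a way depending on $M$, to give the desired ratio somewhere in the tail; then pick $X_0$ far enough out that the inequality $g'(x+y) > M\cdot g'(x)$ persists for every $x > X_0$.

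The hard part is precisely this last step. \cref{lemma:zero_deriv_at_infinity} is only a pointwise statement, and the four standing assumptions do not pin down any decay rate for $g'$, so upgrading $g'(x)\to 0$ into a uniform tail inequality on an entire right-neighborhood of $+\infty$ requires more than a direct computation. I would handle it by contradiction: suppose there exists a sequence $x_n\to +\infty$ along which $g'(x_n+y)/g'(x_n)\le M$; then iterating the resulting inequality along shifts by $y$, together with evenness of $g'$ and the identity $\int_{\mathbb{R}} g'(x)\,dx = 1$ (which follows from \cref{assm:bounded_at_infinity}), one should force a lower bound on $g'$ over an unbounded set that contradicts either finiteness of the total integral or \cref{lemma:zero_deriv_at_infinity}. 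Choosing $|y|$ large enough at the outset makes the given constant $M$ absorbable in this argument, pinning down both $\ppr_{kj}$ and $p_0$.
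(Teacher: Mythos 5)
Your derivative computation and the reduction to the ratio $g'(x+y)/g'(x)$ with $x=g^{-1}(\ppr_{ik})$, $y=g^{-1}(\ppr_{kj})$ match the paper exactly, and you have correctly located the real difficulty: with $y$ \emph{fixed}, the standing assumptions do not yield a uniform tail inequality $g'(x+y)>M\,g'(x)$ for all $x>X_0$. The problem is that this gap is not closable: the uniform statement with a single fixed $\ppr_{kj}$ is actually false under \cref{assm:pairwise_increasing}--\ref{assm:cont_differentiable} alone. Take $g'$ continuous, strictly positive, even, with $\int_{\mathbb{R}}g'=1$, built from plateaus: $g'\equiv h_n$ on $[a_n,a_n+n]$ with $h_n\rightarrow 0$ and $\sum_n n\,h_n<\infty$ (e.g.\ $h_n\propto 4^{-n}$), joined smoothly and reflected to the negative axis. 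All four assumptions hold, yet for any fixed $y$ and any $X_0$ there is an $x>X_0$ sitting in the middle of a plateau of length exceeding $2|y|$, where $g'(x+y)/g'(x)=1<M$. Consequently your proposed contradiction argument cannot go through: the hypothesis $g'(x_n+y)\le M\,g'(x_n)$ holds only along the sequence $x_n$, so it cannot be iterated along shifts by $y$, and even if it could, it produces \emph{upper} bounds on $g'$ at shifted points, which is perfectly consistent with $g'\rightarrow 0$ and $\int g'=1$. Similarly, ``$g'$ concentrates its mass near $0$'' is not implied by evenness plus integrability.

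The paper sidesteps all of this by not fixing $\ppr_{kj}$: it chooses $\ppr_{kj}=g\bigl(g^{-1}(1-\ppr_{ik})+\delta\bigr)$ as a function of $\ppr_{ik}$, so that by \cref{lemma:inverse_symmetry_around_one} the numerator collapses identically to the constant $g'(\delta)>0$, and only the factor $1/g'\bigl(g^{-1}(\ppr_{ik})\bigr)\rightarrow\infty$ needs controlling, which \cref{lemma:zero_deriv_at_infinity} supplies. Note that this establishes the claim with the quantifier $\exists\,\ppr_{kj}$ occurring \emph{inside} $\forall\,\ppr_{ik}$ --- a weaker reading than the theorem's literal phrasing, and it is precisely the stronger literal reading that your attempt targets and that fails in general. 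To get a single fixed $\ppr_{kj}$ you would need an additional hypothesis on $g$ (for instance log-concavity of $g'$ or a prescribed decay rate, both of which hold for the Bradley--Terry sigmoid but are not among the paper's assumptions).
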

\begin{proof}
    We prove this by showing that, for all $M>0$, there exists $0 < p_0, \ppr_{kj} < 1$, such that $\left| \pd{\ppr_{ij}}{\ppr_{ik}} \right| > M$ for all $\ppr_0 < \ppr_{ik} < 1$.
    Due to \cref{lemma:unknown_is_a_function_of_known}, $\ppr_{ij} = g \left( g^{-1}(\ppr_{ik})+g^{-1}(\ppr_{kj}) \right)$. 
    Taking the partial derivative of $\ppr_{ij}$ w.r.t. $\ppr_{ik}$, by the chain rule we have
    \begin{align}
        \pd{\ppr_{ij}}{\ppr_{ik}} &=
        \deval{g}{x}{x=g^{-1}(\ppr_{ik}) + g^{-1}(\ppr_{kj})} \times \diff{g^{-1}}{\ppr_{ik}} = \deval{g}{x}{x=g^{-1}(\ppr_{ik}) + g^{-1}(\ppr_{kj})} \times \left. \left( \dd{g}{x'} \right)^{-1} \right|_{x'=g^{-1}(\ppr_{ik})} \nonumber \\
        &= g' \left( g^{-1}(\ppr_{ik}) + g^{-1}(\ppr_{kj}) \right) \cdot \frac{1}{g'\left(g^{-1}(\ppr_{ik})\right)} = A_1 \cdot A_2. \label{eq2}
    \end{align}
    Since $g(x)$ is strictly increasing on $\mathbb{R}$, there exists $\delta>0$ such that $g'(\delta)>0$.

    Let $\epsilon = \frac{g' \left( \delta \right) }{M} > 0$.
    Due to \cref{lemma:zero_deriv_at_infinity}, $\lim_{x \rightarrow +\infty} g'(x) = 0$. 
    Therefore, $\exists x_0 \in \mathbb{R}$ s.t. $\forall x > x_0$, $g'(x) < \epsilon = \frac{g'(\delta)}{M}$, or equivalently $\frac{1}{g'(x)} > \frac{M}{g'(\delta)}$.
    Consider the $A_2$ part of \cref{eq2}. 
    Let $p_0 = g(x_0)$ and consider all $\ppr_{ik} > p_0$.
    Since $g^{-1}$ is strictly increasing (\cref{lemma:inverse_also_good}), $g^{-1}(\ppr_{ik}) > g^{-1}(p_0) = x_0$.
    Therefore, $\forall \ppr_{ik} > p_0$, $A_2 = \frac{1}{g' \left( g^{-1}(\ppr_{ik}) \right)} > \frac{M}{g'(\delta)}$.

    Consider the $A_1$ part of \cref{eq2}. 
    Let $\ppr_{kj} = g \left( g^{-1} \left( 1 - \ppr_{ik} \right) + \delta \right)$. 
    Due to \cref{lemma:inverse_symmetry_around_one}: 
    $A_1 = g' \left( g^{-1}(\ppr_{ik})+g^{-1}(\ppr_{kj}) \right) = g' \left( g^{-1}(\ppr_{ik})+g^{-1}(1-\ppr_{ik}) + \delta \right) = g'(\delta).$

    Therefore, there exists $\delta > 0$ and $x_0 \in \mathbb{R}$, such that $\forall \ppr_{ij} > p_0 = g(x_0)$, and $\ppr_{kj} = g \left( g^{-1} \left( 1 - \ppr_{ik} \right) + \delta \right)$, $\left| \pd{\ppr_{ij}}{\ppr_{ik}} \right| = A_1 \cdot A_2 > g'(\delta) \frac{M}{g'(\delta)} = M$.
\end{proof}

\cref{thm:general_pairwise_unstable} suggests that $\ppr_{ij}$ could be $M$-sensitive to $\ppr_{ik}$ for any $M$, regardless of how large $M$ is.
Specifically, the proof demonstrates that this sensitivity arises when $\ppr_{ik}$ approaches a sufficiently large value (close to 1).
Note that the proof only considers the case when $x\rightarrow +\infty$ when applying \cref{lemma:zero_deriv_at_infinity}.
In fact, it can be shown that similar sensitivity arises when $\ppr_{ik}$ is sufficiently small.
To conclude, $\ppr_{ij}$ is sensitive to $\ppr_{ik}$ when it approaches dominance marked by probabilities near 0 or 1.

The definition of general pairwise model is too abstract for us to gain more intuitive understanding of the sensitivity demonstrated in \cref{thm:general_pairwise_unstable}.
For example, one may want to know the exact sensitivity of $\ppr_{ij}$ w.r.t. a given $\ppr_{ik}$.
It may also be interesting to compare the sensitivity of different preference models.
In the following sections, we study Bradley-Terry model as a concrete pairwise model for its sensitivity, and extend our discussion to the $K$-tuple Plackett-Luce models.

\subsection{Analysis on the Bradley-Terry Model}\label{sec:analysis_bt_model}

The Bradley-Terry model is a widely-used special case of the general pairwise preference model discussed in the section above.
Under this model, $\pbt_{ij} = g_{\text{\tiny BT}}(s_i-s_j) = 1/\left(1+\exp \left( -\left( s_i-s_j \right) \right) \right)$.
According to \cref{lemma:unknown_is_a_function_of_known}, we write $\pbt_{ij}$ as a function of $\pbt_{ik}, \pbt_{kj}$:
\begin{equation}\label{eq:eq_bt}
    \pbt_{ij} = \frac{1}{1 + \left(1-\pbt_{ik}\right)\left(1-\pbt_{kj} \right) / \pbt_{ik} / \pbt_{kj} }.
\end{equation}
$\pdl{\pbt_{ij}}{\pbt_{ik}}$ can be derived as
\begin{equation}\label{eq:pd_bt}
    \pd{\pbt_{ij}}{\pbt_{ik}} = \frac{\pbt_{kj} \left(1-\pbt_{kj}\right)}{\left( \pbt_{ik} + \pbt_{kj} - 2 \pbt_{ik} \pbt_{kj} - 1 \right)^2}.
\end{equation}

\paragraph{When is Bradley-Terry model sensitive?}
Consider the $M$-sensitivity of $\pbt_{ij}$ w.r.t. $\pbt_{ik}$ by solving $\left|\pdl{\pbt_{ij}}{\pbt_{ik}}\right| > M$. 
It turns out that, for $M>1$, $\Omega_M(\pbt_{ij},\pbt_{ik})$ consists of two regions:
\begin{align}\label{eq:bt_sensitive_region}
    & \text{Case 1:} \ 0 < \pbt_{kj} < \frac{1}{1+M}, \ \gamma_0 < \pbt_{ik} < 1, \nonumber \\
    & \text{Case 2:} \ 1-\frac{1}{1+M} < \pbt_{kj} < 1, \ 0 < \pbt_{ik} < \gamma_0, \text{where} \ \gamma_0 = 1 - \frac{\sqrt{\frac{\left(1/\pbt_{kj}-1\right)}{M}}-1}{1/\pbt_{kj}-2}.
\end{align}

We can compute the area of the regions above, which gives the following corollary.
\begin{proposition}\label{cor:bt_m_sensitive_area}
    The area of $\Omega_M\left(\pbt_{ij},\pbt_{ik}\right)$, where $M>1$, is
    \begin{equation*}
        A\left(\Omega_M\left(\pbt_{ij},\pbt_{ik}\right)\right) = \frac{1}{2} \ln\left(\frac{M-1}{M+1}\right) + \frac{1}{2\sqrt{M}}\ln\left(\frac{\sqrt{M}+1}{\sqrt{M}-1}\right).
    \end{equation*}
\end{proposition}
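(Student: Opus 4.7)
The plan is first to exploit a symmetry to halve the work, and then to reduce the remaining integral to elementary form via a trigonometric substitution. Writing $q = \pbt_{kj}$ and $p = \pbt_{ik}$, the partial derivative in \eqref{eq:pd_bt} is manifestly invariant under the involution $(p,q) \mapsto (1-p, 1-q)$ (numerator and squared denominator are both preserved), so this involution maps Case 1 of \eqref{eq:bt_sensitive_region} bijectively onto Case 2. Equivalently, a short algebraic check shows $\gamma_0(q) + \gamma_0(1-q) = 1$, which gives the same conclusion by substituting $q \mapsto 1-q$ in the Case 2 integral. Hence
\[
A\bigl(\Omega_M(\pbt_{ij}, \pbt_{ik})\bigr) \;=\; 2\int_0^{1/(1+M)} \bigl(1 - \gamma_0(q)\bigr)\, dq \;=\; 2\int_0^{1/(1+M)} \frac{\sqrt{q(1-q)}/\sqrt{M} - q}{1 - 2q}\, dq,
\]
where the second equality is direct simplification of $\gamma_0$, and the denominator $1-2q$ is positive on the range since $M > 1$ implies $1/(1+M) < 1/2$.

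Next I would apply the substitution $q = (1 - \cos\theta)/2$, under which $\sqrt{q(1-q)} = \tfrac{1}{2}\sin\theta$, $1 - 2q = \cos\theta$, and $dq = \tfrac{1}{2}\sin\theta\, d\theta$. The bounds become $\theta = 0$ (at $q = 0$) and $\theta = \theta_1$ with $\cos\theta_1 = (M-1)/(M+1)$, whence $\sin\theta_1 = 2\sqrt{M}/(M+1)$. The integrand then splits as a constant-coefficient linear combination of $\sin\theta$, $\tan\theta$, and $\sin^2\theta/\cos\theta = \sec\theta - \cos\theta$, all with elementary antiderivatives ($-\cos\theta$, $-\ln|\cos\theta|$, and $\ln|\sec\theta + \tan\theta| - \sin\theta$ respectively). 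The crucial algebraic identity that produces the clean final form is
\[
\sec\theta_1 + \tan\theta_1 \;=\; \frac{1 + \sin\theta_1}{\cos\theta_1} \;=\; \frac{(\sqrt{M}+1)^2}{M - 1} \;=\; \frac{\sqrt{M}+1}{\sqrt{M}-1},
\]
which supplies the $\ln\bigl((\sqrt{M}+1)/(\sqrt{M}-1)\bigr)$ term, while the $\tan\theta$ contribution produces $\ln((M-1)/(M+1))$, and the polynomial pieces ($\sin\theta$, $\cos\theta$) cancel against each other upon combining coefficients.

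The main obstacle is not conceptual but bookkeeping: one must carefully track the prefactor $1/\sqrt{M}$ attached to the $\sin^2\theta/\cos\theta$ term, the factor of $\tfrac{1}{2}$ from $dq$, and the factor of $2$ from the symmetry reduction, and verify that these combine to leave exactly $\tfrac{1}{2}$ and $\tfrac{1}{2\sqrt{M}}$ in front of the two logarithms in the stated formula. The substitution $q = (1-\cos\theta)/2$ is the key choice, because it simultaneously rationalizes $\sqrt{q(1-q)}$ and turns the awkward $1/(1-2q)$ denominator into $\sec\theta$, without which the integrand would not decompose into standard forms.
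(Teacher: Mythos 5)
Your proposal is correct and follows essentially the same route as the paper's proof: both halve the region via the Case~1/Case~2 symmetry (which you, unlike the paper, actually justify through the involution $(p,q)\mapsto(1-p,1-q)$ or the identity $\gamma_0(q)+\gamma_0(1-q)=1$) and then rationalize $\sqrt{q(1-q)}$ by a trigonometric substitution --- your $q=(1-\cos\theta)/2$ is the paper's $x=\cos^2\theta$ after $x=1-q$ and a double-angle reparametrization. The key identities you invoke ($\cos\theta_1=(M-1)/(M+1)$, $\sin\theta_1=2\sqrt{M}/(M+1)$, $\sec\theta_1+\tan\theta_1=(\sqrt{M}+1)/(\sqrt{M}-1)$, and the cancellation of the $1/(M+1)$ boundary terms) all check out and the computation lands exactly on the stated formula.
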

Readers are referred to \cref{sec:proof_of_bt_sensitive_area} for a proof of \cref{cor:bt_m_sensitive_area}.
The analysis above can also be applied to $\Omega_M(\pbt_{ij}, \pbt_{kj})$.
\cref{fig:sidebyside} illustrates $\Omega_M(\pbt_{ij}, \pbt_{ik})$ and $\Omega_M(\pbt_{ij}, \pbt_{kj})$.
As the figure shows, consistent with our discussion on the general pairwise model, $\pbt_{ij}$ becomes increasingly sensitive to $\pbt_{ik}$ and $\pbt_{kj}$ as they approach 0 or 1.
Furthermore, the size of $M$-sensitive region shrinks as $M$ increases.
However, these regions are not negligible even for fairly large values of $M$.

\begin{figure}[t]
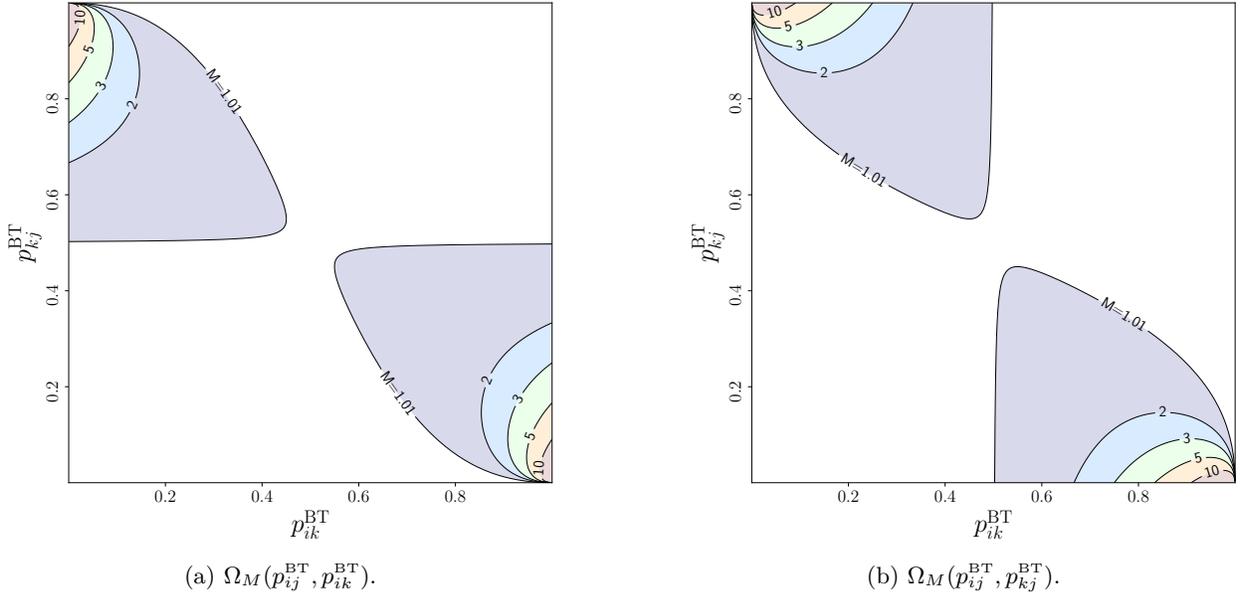

    \centering
    \begin{subfigure}{0.45\textwidth}
        \centering
        \includegraphics[width=1.0\columnwidth]{figs/pij\_pik.pdf}
        \caption{$\Omega_M(\pbt_{ij}, \pbt_{ik})$.}
        \label{fig:pijpik}
    \end{subfigure}%
    \hfill
    \begin{subfigure}{0.45\textwidth}
        \centering
        \includegraphics[width=1.0\columnwidth]{figs/pij\_pkj.pdf}
        \caption{$\Omega_M(\pbt_{ij}, \pbt_{kj})$.}
        \label{fig:pijpik_zoomed}
    \end{subfigure}
    \caption{$M$-sensitive regions of $\pbt_{ij}$ w.r.t. $\pbt_{ik}$ and $\pbt_{kj}$, for $M=\{1.01,2,3,5,10\}$.}
    \label{fig:sidebyside}
\end{figure}

Below we present an example demonstrating the possible negative consequences of sensitivities.
We discuss further implications of our discussions so far in \cref{sec:avoid_extreme_dominant_preferences}.
\begin{example}\label{example:small_deviation_in_bt_model}
    Suppose $(p^{\Set{D}}_{ik}, p^{\Set{D}}_{kj})=(0.99, 0.02)$ are the probabilities specified by a dataset $\Set{D}$ of preferences.
    The dataset is used to train two preference models, $\pbtbt{1}$ and $\pbtbt{2}$, which will be deployed on $(o_i, o_j)$.
    Suppose that after training, due to factors such as randomness in optimization or differences in hyperparameters, the probabilities of $(o_i, o_k)$ learned by these models deviate by $1\%$ deviation from the training set.
    Let $(\pbtbt{1}_{ik},\pbtbt{1}_{kj})=(0.9999, 0.02)$ and $(\pbtbt{2}_{ik},\pbtbt{2}_{kj})=(0.9801, 0.02)$ be the actual probabilities learned by the model.
    While this difference appears minor, according to \cref{eq:eq_bt}, the models will behave very differently for $(o_i, o_j)$, with $\pbtbt{1}_{ij} \approx 0.96$ and $\pbtbt{2}_{ij} \approx 0.50$.
    Such difference arises because, by \cref{eq:bt_sensitive_region}, $(\pbt_{ik}, \pbt_{kj})=(0.99, 0.02)$ belongs to $\Omega_{20}(\pbt_{ij}, \pbt_{ik})$, meaning that $1$ unit of deviation in $\pbt_{ik}$ induces more than $20$ units of changes in $\pbt_{ij}$.
\end{example}

\subsection{Extension to the $K$-tuple Plackett-Luce Model}\label{sec:plackett_luce_model}

The Bradley-Terry model can be extended to the more general $K$-tuple Plackett-Luce model, which describes probabilities of preferences over $K$ options, where $2 \leq K \leq N$.
In the extended model, a preference is a $K$-tuple $\Vec{\omega} = \left( o_{\omega_1}, o_{\omega_2}, \ldots, o_{\omega_{K-1}}, o_{\omega_K} \right)$, where $o_{\omega_*} \in \Set{O}$.
The probability of this preference is defined as:
\begin{equation}\label{eq:pl_definition}
    \ppl{\Vec{\omega}} = \prod_{u=1}^{K-1} \frac{\exp\left({s_{w_u}}\right)}{\sum_{v=u}^{K} \exp\left(s_{\omega_v}\right)} = \prod_{u=1}^{K-1} \frac{1}{1+\sum_{v=u+1}^{K} \exp\left(-\left(s_{\omega_u}-s_{\omega_v}\right)\right)}.
\end{equation}
When $K=2$, the model degenerates to the Bradley-Terry model. 

We are interested in whether sensitivity analysis for Bradley-Terry model can generalize to $K$-tuple preference models for cases where $K>2$.
In particular, we address the following questions: (1) how $\ppl{\Vec{\omega}}$ can be expressed as a function of other preference probabilities, (2) its $M$-sensitivity w.r.t. these probabilities, and (3) the extent of its sensitivity in comparison with the Bradley-Terry model.

\begin{lemma}
    Let $\Vec{\omega}$ be a $K$-tuple preference, where $K > 2$.
    Let $\Vec{\omega}_{uv} = \left(\Vec{\omega}'_{uv} ; o_{\omega_u}, o_{\omega_v}\right)$ be a $K$-permutation of $\Set{O}$ with $o_{\omega_u}$ and $o_{\omega_v}$ being the last two elements and $\Vec{\omega}'_{uv}$ being any $(K-2)$-permutation of $\Set{O} \setminus \{o_{\omega_u}, o_{\omega_v}\}$.
    Then $\ppl{\Vec{\omega}}$ is a function of $\ppl{\Vec{\omega}_{vu}}/\ppl{\Vec{\omega}_{uv}}$, where $1 \leq u < v \leq K$.
    More specifically,
    \begin{equation}\label{eq:pl_p_over_p_rewrite}
        \ppl{\Vec{\omega}} = \prod_{u=1}^{K-1} \frac{1}{1+\sum_{v=u+1}^{K} \frac{\ppl{\Vec{\omega}_{vu}} }{\ppl{\Vec{\omega}_{uv}}}}.
    \end{equation}
\end{lemma}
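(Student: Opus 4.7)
The plan is to reduce the lemma to a single identity: for any admissible prefix $\Vec{\omega}'_{uv}$,
\[
\exp\bigl(-(s_{\omega_u}-s_{\omega_v})\bigr) \;=\; \frac{\ppl{\Vec{\omega}_{vu}}}{\ppl{\Vec{\omega}_{uv}}}.
\]
Once this is established, substituting it into the defining expression \cref{eq:pl_definition} term by term immediately produces the claimed formula \cref{eq:pl_p_over_p_rewrite}.

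First, I would expand both $\ppl{\Vec{\omega}_{uv}}$ and $\ppl{\Vec{\omega}_{vu}}$ using the canonical Plackett-Luce form on the left of \cref{eq:pl_definition}, writing each as a product of $K-1$ factors of the form $\exp(s_{\pi_{u'}})/\sum_{v'=u'}^{K} \exp(s_{\pi_{v'}})$, where $\Vec{\pi}$ denotes the permutation in question. Since $\Vec{\omega}_{uv}$ and $\Vec{\omega}_{vu}$ share the same prefix $\Vec{\omega}'_{uv}$ of length $K-2$, the factors at positions $u'=1,\ldots,K-2$ are identical in both: the numerators are fixed by the shared prefix, and each denominator depends only on the \emph{set} of remaining items, which is the same regardless of how $o_{\omega_u}$ and $o_{\omega_v}$ are ordered at the end. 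All these factors therefore cancel in the ratio, and only the $(K-1)$-th factor survives, evaluating to $\exp(s_{\omega_v})/\exp(s_{\omega_u}) = \exp(-(s_{\omega_u}-s_{\omega_v}))$, which yields the identity above.

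Substituting this identity back into \cref{eq:pl_definition} replaces every occurrence of $\exp(-(s_{\omega_u}-s_{\omega_v}))$ in the inner sum with $\ppl{\Vec{\omega}_{vu}}/\ppl{\Vec{\omega}_{uv}}$, producing exactly \cref{eq:pl_p_over_p_rewrite} and simultaneously making clear that $\ppl{\Vec{\omega}}$ is a function of these pairwise ratios.

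Since the argument is a direct algebraic verification, I do not anticipate a substantive obstacle. The only point requiring mild care is confirming that the prefix-position denominators are symmetric in $(o_{\omega_u}, o_{\omega_v})$, which is what guarantees both that the cancellation in the ratio is clean and that the value of $\ppl{\Vec{\omega}_{vu}}/\ppl{\Vec{\omega}_{uv}}$ is independent of the particular choice of $\Vec{\omega}'_{uv}$; this is why the lemma can legitimately speak of "any $(K-2)$-permutation" without ambiguity.
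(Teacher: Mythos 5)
Your proposal is correct and follows essentially the same route as the paper's proof: both establish the identity $\ppl{\Vec{\omega}_{vu}}/\ppl{\Vec{\omega}_{uv}} = \exp(-(s_{\omega_u}-s_{\omega_v}))$ by cancelling the shared first $K-2$ factors of the two probabilities and then substitute this into \cref{eq:pl_definition}. Your added remark that the prefix denominators depend only on the \emph{set} of remaining items is a nice explicit justification of the cancellation that the paper leaves implicit, but it does not change the argument.
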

\begin{proof}
    Note that there are $(K-1)$ entries in the product of \cref{eq:pl_definition}.
    Moreover, $\ppl{\Vec{\omega}_{uv}}$ and $\ppl{\Vec{\omega}_{vu}}$ share the same first $(K-2)$ entries because $\Vec{\omega}_{uv}$ and $\Vec{\omega}_{vu}$ share the same prefix $\Vec{\omega}'_{uv}$. 
    Therefore, 
    \begin{equation}\label{eq:pl_odd_equals_exp}
        \frac{\ppl{\Vec{\omega}_{vu}}}{\ppl{\Vec{\omega}_{uv}}} = \frac{\exp\left(s_{\omega_v}\right)}{\exp\left(s_{\omega_v}\right)+\exp\left(s_{\omega_u}\right) }/\frac{\exp\left(s_{\omega_u}\right)}{\exp\left(s_{\omega_u}\right)+\exp\left(s_{\omega_v}\right)} = \exp\left({-(s_{\omega_u} - s_{\omega_v})}\right).
    \end{equation}
    Rewriting $\exp\left(-\left(s_{\omega_u}-s_{\omega_v}\right)\right)$ in \cref{eq:pl_definition} using \cref{eq:pl_odd_equals_exp} we get \cref{eq:pl_p_over_p_rewrite}.
\end{proof}

Since $\ppl{\Vec{\omega}}$ can be represented as a function of all ${\ppl{\Vec{\omega}_{uv}}}$ and ${\ppl{\Vec{\omega}_{vu}}}$ where $1 \leq u < v \leq K$, its $M$-sensitivity w.r.t. these probabilities can be studied similar to the Bradley-Terry model.
Taking the partial derivative $\ppl{\Vec{\omega}}$ of ${\ppl{\Vec{\omega}_{uv}}}$ and ${\ppl{\Vec{\omega}_{vu}}}$, we get
\begin{align}\label{eq:pd_pl}
    \pd{\ppl{\Vec{\omega}}}{\ppl{\Vec{\omega}_{uv}}} = \frac{\ppl{\Vec{\omega}_{vu}}}{\bigl( \alpha \ppl{\Vec{\omega}_{uv}} + \ppl{\Vec{\omega}_{vu}} \bigr)^2} \cdot \beta, \quad 
    \pd{\ppl{\Vec{\omega}}}{\ppl{\Vec{\omega}_{vu}}} =-\frac{\ppl{\Vec{\omega}_{uv}}}{ \bigl( \alpha \ppl{\Vec{\omega}_{uv}} + \ppl{\Vec{\omega}_{vu}} \bigr)^2} \cdot \beta,
\end{align}
where
\begin{equation}\label{eq:alpha_and_beta}
    \alpha = 1+\sum\limits_{\substack{t=u+1\\t \neq v}}^{K} \frac{\ppl{\Vec{\omega}_{tu}}}{\ppl{\Vec{\omega}_{ut}}}, \quad \beta = \prod\limits_{\substack{l=1\\l \neq u}}^{K-1} \frac{1}{1+\sum\limits_{m=l+1}^{K} \frac{\ppl{\Vec{\omega}_{ml}}}{\ppl{\Vec{\omega}_{lm}}}}.
\end{equation}
As $0<\ppl{\omega_{uv}}<1$ for all $1 \leq u < v \leq K$, it is clear that $\alpha \in (1, \infty)$ and $\beta \in (0, 1)$.

\paragraph{When is Plackett-Luce model sensitive?}
Consider the $M$-sensitive regions of $\ppl{\Vec{\omega}}$ w.r.t. $\ppl{\Vec{\omega}_{uv}}$.
Solving $\left|\pdl{\ppl{\Vec{\omega}}}{\ppl{\Vec{\omega}_{uv}}}\right| > M$, we get $\Omega_M(\ppl{\Vec{\omega}}, \ppl{\Vec{\omega}_{uv}})$:
\begin{equation}\label{eq:pl_sensitive_area}
    0 < \ppl{\Vec{\omega}_{uv}} < \frac{\beta}{4 \alpha M}, \ 
    \gamma_1 - \gamma_2 < \ppl{\Vec{\omega}_{vu}} < \gamma_1 + \gamma_2,
\end{equation}
where $ \gamma_1 = \frac{\beta-2 \alpha M \ppl{\Vec{\omega}_{uv}}}{2 M} $ and $ \gamma_2 = \frac{\sqrt{\beta\left(\beta-4\alpha M \ppl{\Vec{\omega}_{uv}}\right)}}{2 M} $.

Similarly, we solve $\left|\pdl{\ppl{\Vec{\omega}}}{\ppl{\Vec{\omega}_{vu}}}\right| > M$ and get $\Omega_M(\ppl{\Vec{\omega}}, \ppl{\Vec{\omega}_{vu}})$:
\begin{equation}
    \eta_1 - \eta_2 < \ppl{\Vec{\omega}_{uv}} < \eta_1 + \eta_2, \quad
    0 < \ppl{\Vec{\omega}_{vu}} < \frac{\beta}{4 \alpha M},
\end{equation}
where $ \eta_1 = \frac{\beta-2 \alpha M \ppl{\Vec{\omega}_{vu}} }{2 \alpha^2 M} $ and $ \eta_2 = \frac{\sqrt{\beta\left(\beta-4\alpha M \ppl{\Vec{\omega}_{vu}}\right)}}{2 \alpha^2 M} $.

\begin{proposition}\label{cor:k_tuple_area_of_sensitivity}
    $A\left(\Omega_M\left( \ppl{\Vec{\omega}},\ppl{\Vec{\omega}_{uv}} \right)\right) = \frac{\beta^2}{6 \alpha M^2}$; $A\left(\Omega_M\left( \ppl{\Vec{\omega}},\ppl{\Vec{\omega}_{vu}} \right)\right) = \frac{\beta^2}{6 \alpha^3 M^2}$.
\end{proposition}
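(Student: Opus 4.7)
The plan is to compute each area directly from the explicit region descriptions given immediately before the proposition. Throughout, I would treat $\alpha$ and $\beta$ defined in \eqref{eq:alpha_and_beta} as constants, since they depend on the remaining preference probabilities but not on $\ppl{\Vec{\omega}_{uv}}$ or $\ppl{\Vec{\omega}_{vu}}$ themselves.

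For $\Omega_M(\ppl{\Vec{\omega}}, \ppl{\Vec{\omega}_{uv}})$ I would slice by $x = \ppl{\Vec{\omega}_{uv}}$. By \eqref{eq:pl_sensitive_area}, for each such $x \in (0, \beta/(4\alpha M))$, the set of admissible $\ppl{\Vec{\omega}_{vu}}$ is an interval of length $2\gamma_2 = \sqrt{\beta(\beta - 4\alpha M x)}/M$, so Fubini gives
$$A\left(\Omega_M(\ppl{\Vec{\omega}}, \ppl{\Vec{\omega}_{uv}})\right) = \int_0^{\beta/(4\alpha M)} \frac{\sqrt{\beta\,(\beta - 4\alpha M x)}}{M}\, dx.$$
The substitution $u = \beta - 4\alpha M x$ turns this into $\tfrac{\sqrt\beta}{4\alpha M^2}\int_0^{\beta}\sqrt{u}\,du$, which evaluates to $\beta^2/(6\alpha M^2)$. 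The second region is handled identically: slicing by $y = \ppl{\Vec{\omega}_{vu}}$ yields a width $2\eta_2$ that differs from $2\gamma_2$ only by the constant factor $1/\alpha^2$, so the same integral now evaluates to $\beta^2/(6\alpha^3 M^2)$.

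The only subtlety worth checking is that the slice endpoints $\gamma_1 \pm \gamma_2$ (and $\eta_1 \pm \eta_2$) genuinely lie inside $(0,1)$ across the full range of $x$ (respectively $y$), so that no clipping against the probability simplex is required and the slice width really equals $2\gamma_2$. A short computation gives $\gamma_1^2 - \gamma_2^2 = \alpha^2 x^2 \geq 0$, and $\gamma_1 = (\beta - 2\alpha M x)/(2M) > 0$ on the relevant interval, so $0 \leq \gamma_1 - \gamma_2 \leq \gamma_1 + \gamma_2 \leq 2\gamma_1 \leq \beta/M < 1$, using $\beta \in (0,1)$ and $M > 1$; an identical estimate covers the $\eta$ case. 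Past this sanity check there is no obstacle at all, and the entire proof is essentially one-line calculus built on the elementary identity $\int_0^\beta \sqrt{u}\,du = \tfrac{2}{3}\beta^{3/2}$.
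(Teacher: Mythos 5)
Your proof is correct and follows essentially the same route as the paper's: integrate the slice width $2\gamma_2$ (resp. $2\eta_2$) over $\ppl{\Vec{\omega}_{uv}} \in \left(0, \tfrac{\beta}{4\alpha M}\right)$ via the substitution $u=\beta-4\alpha M x$. Your added check that $\gamma_1 \pm \gamma_2$ stays inside $(0,1)$ (via $\gamma_1^2-\gamma_2^2=\alpha^2 x^2$) is a worthwhile detail the paper omits, and your final values match the proposition as stated.
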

\cref{sec:proof_of_pl_sensitive_area} provides a proof for this corollary.

The analysis indicates that the $M$-sensitivity of $\ppl{\Vec{\omega}}$ w.r.t. $\ppl{\Vec{\omega}_{uv}}$ and $\ppl{\Vec{\omega}_{vu}}$ is influenced by $\alpha$ and $\beta$.
For a constant $M$, the region of sensitivity diminishes as $\alpha$ increases or $\beta$ decreases. 
When $\alpha \rightarrow 1$ and $\beta \rightarrow 1$, the $M$-sensitive region reaches its maximum size.
\cref{fig:pl_ab_comparisons} illustrates $\Omega_M(\ppl{\Vec{\omega}},\ppl{\Vec{\omega}_{uv}})$ and $\Omega_M(\ppl{\Vec{\omega}},\ppl{\Vec{\omega}_{uv}})$ when $\alpha=1.01$ and $\beta=0.99$.
It can be observed that significant sensitive regions, marked by large $M$, cover cases where $\ppl{\Vec{\omega}_{uv}}$ and $\ppl{\Vec{\omega}_{vu}}$ are both close to $0$.

\begin{figure}[h!]
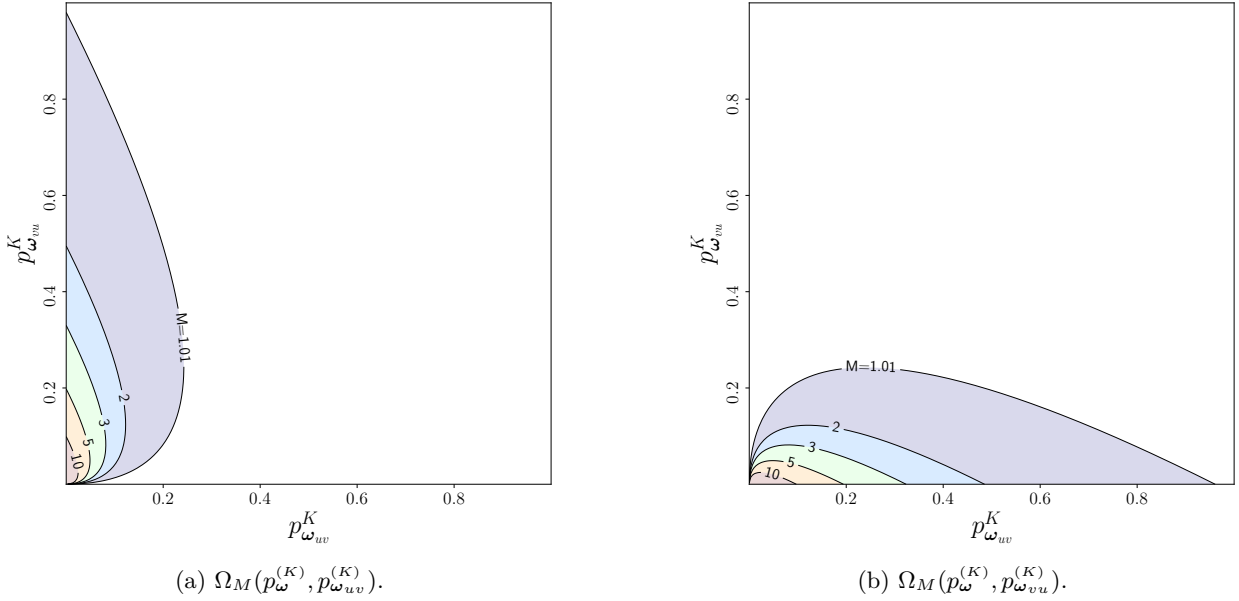

    \centering
    \begin{subfigure}{0.45\textwidth}
        \centering
        \includegraphics[width=1.0\columnwidth]{figs/pplk\_pplij\_1\_1.pdf}
        \caption{$\Omega_M(\ppl{\Vec{\omega}}, \ppl{\Vec{\omega}_{uv}})$.}
        \label{fig:ppl_a1b1}
    \end{subfigure}%
    \hfill
    \begin{subfigure}{0.45\textwidth}
        \centering
        \includegraphics[width=1.0\columnwidth]{figs/pplk\_pplji\_1\_1.pdf}
        \caption{$\Omega_M(\ppl{\Vec{\omega}}, \ppl{\Vec{\omega}_{vu}})$.}
        \label{fig:ppl_a125b1}
    \end{subfigure}
    \caption{$M$-sensitive region of $\ppl{\Vec{\omega}}$ w.r.t. $\ppl{\Vec{\omega}_{uv}}$ and $\ppl{\Vec{\omega}_{vu}}$, with $\alpha=1.01, \beta=0.99$.}
    \label{fig:pl_ab_comparisons}
\end{figure}

\vspace{-0.5em}

\paragraph{Comparing Bradley-Terry and Plackett-Luce model.}

In the analysis above we have shown that $K$-tuple Plackett-Luce models (including the Bradley-Terry model) have non-empty $M$-sensitive regions for any $M>1$.
Below we compare the area of $M$-sensitive regions for these two models, namely, $A\left(\Omega_M\left(\pbt_{ij},\pbt_{ik}\right)\right)$ and $A\left(\Omega_M\left( \ppl{\Vec{\omega}},\ppl{\Vec{\omega}_{uv}} \right)\right)$.
\begin{theorem}\label{cor:k_is_better_than_pair}
    Let $\pbt_{ij}$, $\pbt_{ik}$ be the probabilities of $(o_i,o_j)$ and $(o_i,o_k)$ respectively under the Bradley-Terry model.
    Let $\ppl{\Vec{\omega}}$, and $\ppl{\Vec{\omega}_{uv}}$ be the probabilities of preference $\Vec{\omega}$ and $\Vec{\omega}_{uv}$ under a $K$-tuple Plackett-Luce model.
    Then $\forall M>1$ and $K > 2$, 
    $A\left(\Omega_M\left(\pbt_{ij},\pbt_{ik}\right)\right) > A\left(\Omega_M\left( \ppl{\Vec{\omega}},\ppl{\Vec{\omega}_{uv}} \right)\right).$
\end{theorem}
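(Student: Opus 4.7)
The inequality follows from a two-step reduction: first discard the Plackett-Luce quantities $\alpha$ and $\beta$ by a crude upper bound, then prove a universal lower bound on the Bradley-Terry area. From \cref{eq:alpha_and_beta}, $\alpha \geq 1$ (it is $1$ plus a sum of positive probability ratios) and $0 < \beta < 1$ (it is a nonempty product of factors in $(0,1)$), so $A\bigl(\Omega_M(\ppl{\Vec{\omega}},\ppl{\Vec{\omega}_{uv}})\bigr) = \beta^2/(6\alpha M^2) < 1/(6M^2)$ strictly. It therefore suffices to prove the parameter-free lower bound $A\bigl(\Omega_M(\pbt_{ij},\pbt_{ik})\bigr) \geq 1/(6M^2)$ for every $M > 1$.

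To make the Bradley-Terry area tractable I would substitute $u = 1/\sqrt{M} \in (0,1)$. The identities $\tfrac{M-1}{M+1} = \tfrac{1-u^2}{1+u^2}$ and $\tfrac{\sqrt{M}+1}{\sqrt{M}-1} = \tfrac{1+u}{1-u}$ collapse the formula from \cref{cor:bt_m_sensitive_area} into $u\tanh^{-1}(u) - \tanh^{-1}(u^2)$, and $1/(6M^2)$ becomes $u^4/6$. The target reduces to showing that $g(u) := u\tanh^{-1}(u) - \tanh^{-1}(u^2) - u^4/6$ is strictly positive on $(0,1)$; note $g(0) = 0$.

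I would finish by iterating ``derivative vanishes at $0$ and is shown to be positive''. A direct differentiation, simplified via $1/(1-u^2) = (1+u^2)/(1-u^4)$, gives $g'(u) = \tanh^{-1}(u) - u/(1+u^2) - 2u^3/3$, which again vanishes at $u = 0$. Differentiating once more and writing $\tfrac{1}{1-u^2} - \tfrac{1-u^2}{(1+u^2)^2}$ over a common denominator yields $g''(u) = 2u^2\bigl[2 - (1-u^2)(1+u^2)^2\bigr]\big/\bigl[(1-u^2)(1+u^2)^2\bigr]$. Positivity of $g''$ reduces to the elementary polynomial bound $(1-u^2)(1+u^2)^2 = 1 + u^2 - u^4 - u^6 < 1 + u^2 < 2$ on $(0,1)$. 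Chaining $g'' > 0 \Rightarrow g' > 0 \Rightarrow g > 0$ closes the argument.

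The main obstacle is spotting the change of variable $u = 1/\sqrt{M}$: the raw Bradley-Terry area, with its mixed $M$ and $\sqrt{M}$ logarithms, does not obviously compare with the clean $1/M^2$ scale of the Plackett-Luce area, whereas once the expression is rewritten in $\tanh^{-1}$ form the two derivative reductions are mechanical and the final polynomial bound is one line.
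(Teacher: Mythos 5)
Your proof is correct. The overall skeleton matches the paper's: you take the two closed-form areas from \cref{cor:bt_m_sensitive_area} and \cref{cor:k_tuple_area_of_sensitivity}, discard $\alpha$ and $\beta$ via $\alpha\geq 1$, $0<\beta<1$ to reduce the claim to $A\left(\Omega_M\left(\pbt_{ij},\pbt_{ik}\right)\right)>\frac{1}{6M^2}$, and rewrite the Bradley--Terry area in $\tanh^{-1}$ form (your $u\tanh^{-1}(u)-\tanh^{-1}(u^2)$ with $u=1/\sqrt{M}$ is exactly the paper's $-\tanh^{-1}(1/M)+\frac{1}{\sqrt{M}}\tanh^{-1}(1/\sqrt{M})$). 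Where you diverge is the final inequality: the paper expands both $\tanh^{-1}$ terms as power series in $1/M$, regroups into even/odd powers, shows each grouped term $L_n$ is positive, and keeps only $L_1=\frac{1}{3M^2}-\frac{2}{15M^3}>\frac{1}{5M^2}$; you instead prove positivity of $g(u)=u\tanh^{-1}(u)-\tanh^{-1}(u^2)-u^4/6$ on $(0,1)$ by differentiating twice and reducing to the polynomial bound $(1-u^2)(1+u^2)^2<2$. I checked your computations of $g'$ and $g''$ (the simplification $\frac{u}{1-u^2}-\frac{2u}{1-u^4}=-\frac{u}{1+u^2}$ and the numerator $(1+u^2)^2-(1-u^2)^2=4u^2$) and they are right, so the chain $g''>0\Rightarrow g'>0\Rightarrow g>0$ closes the argument. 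Your route avoids the series-regrouping bookkeeping, which is the most error-prone part of the paper's proof, at the cost of two differentiations; the paper's route yields slightly more information (the sharper constant $\frac{1}{5M^2}$ and a full positive series for the area). Either is a valid proof.
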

\vspace{-1em}
\begin{proofsketch}
    To prove that $A\left(\Omega_M\left(\pbt_{ij},\pbt_{ik}\right)\right) = \frac{1}{2} \ln\left(\frac{M-1}{M+1}\right) + \frac{1}{2\sqrt{M}}\ln\left(\frac{\sqrt{M}+1}{\sqrt{M}-1}\right) > \frac{\beta^2}{6 \alpha M^2} = A\left(\Omega_M\left( \ppl{\Vec{\omega}},\ppl{\Vec{\omega}_{uv}} \right)\right)$, we denote the left hand side as $L$. 
    Note that $\tanh^{-1}\left(x\right) = \frac{1}{2} \frac{\ln\left(x+1\right)}{\ln\left(x-1\right)},$ so $L = -\tanh^{-1}\left(\frac{1}{M}\right)+\frac{1}{\sqrt{M}}\tanh^{-1}\left(\frac{1}{\sqrt{M}}\right)$.
    Further, with $\tanh^{-1}(\frac{1}{M}) = \sum_{k=0}^{\infty}\frac{(1/M)^{2k+1}}{2k+1}$, we can expand $L$ and show that $L > {1}/{(6 M^2)} > {\beta^2}/{(6 \alpha M^2)}$.
    \cref{sec:full_proof_of_k_is_better_than_pair} presents a full proof.
\end{proofsketch}
The theorem indicates that $K$-tuple preference models, with $K>2$, are more robust than the Bradley-Terry model.
Further implications of this theorem are discussed in \cref{sec:longer_tuper_more_robust}.

\section{Implications}
\vspace{-0.5em}

\vspace{-0.5em}
\subsection{Dominant preferences impact robustness of value alignment}\label{sec:avoid_extreme_dominant_preferences}
In practice, any preference dataset $\Set{D}$ can only provide preference probabilities for a subset of all possible preference.
To link this setting to our earlier discussions, assume the dataset $\Set{D}$ provides $p^{\Set{D}}_{ik}$ and $p^{\Set{D}}_{kj}$ to train a Bradley-Terry model $\pbt$ to predict $p_{ij}$.
If $p^{\Set{D}}_{ik}$ and $p^{\Set{D}}_{kj}$ expresses dominance, making $(\pbt_{ik}, \pbt_{kj})$ fall in the $M$-sensitive region of $\pbt_{ij}$ for some large $M$, then training-time perturbations that lead to minor changes in $\pbt_{ik}$ in the trained model may result in significant changes in the model's prediction of $p_{ij}$.
Concretely, the following consequences could arise:
\begin{enumerate}[label=\arabic*., leftmargin=2em] %
    \item \label{bullet:consequence1} \emph{Preference models with similar behaviors on training set may assign drastically different probabilities to unseen preferences.}
    This has been demonstrated in \cref{example:small_deviation_in_bt_model}.
    
    \item \emph{Minor changes in the data distributions within the training set may lead to significant changes in the learned preference models.}
    For example, consider datasets $\Set{D}_1$ and $\Set{D}_2$ with different probabilities for $(o_i,o_k)$ and $(o_k,o_j)$.
    Suppose $(p^{\Set{D}_1}_{ik},p^{\Set{D}_1}_{kj})=(0.9999, 0.02)$ and $(p^{\Set{D}_2}_{ik},p^{\Set{D}_2}_{kj})=(0.9801, 0.02)$.
    If two preference models $\pbtbt{1}$ and $\pbtbt{2}$ are trained to align perfectly with the distribution in the $\Set{D}_1$ and $\Set{D}_2$ respectively, similar to \cref{example:small_deviation_in_bt_model}, the two models will assign $0.95$ and $0.50$ to $(o_i,o_j)$.
\end{enumerate}

\vspace{-1em}
\paragraph{Trade-off in preference modeling for value alignment.}
While we use Bradley-Terry model to discuss the consequences above, they are applicable to general pairwise models that follow \cref{assm:preference_depend_on_differences}-\ref{assm:cont_differentiable}, and with slightly modification, to $K$-tuple Plackett-Luce models.
This gives a fundamental trade-off in preference modeling for value alignment, i.e., one has to either (1) handle reduced robustness to model dominant preferences, or (2) weaken dominant preferences to improve robustness.
The choice of trade-off could vary depending on the purpose of the preference model.
For example, (1) may be preferable when training models only used to suppress unsafe behaviors in limited domains, and (2) may be preferable when training general-purpose models where lowered robustness could cause unexpected behaviors for unseen preferences in broad domains.

\subsection{Longer Tuples of Preferences may Improve Robustness}\label{sec:longer_tuper_more_robust}
From \cref{cor:k_is_better_than_pair} we know that, when $K>2$, $K$-tuple Plackett-Luce model is more robust than the pairwise Bradley-Terry model ($K=2$).
It is natural to extend this discussion to compare Plackett-Luce models with different $K$.
Recall that by \cref{cor:k_tuple_area_of_sensitivity}, $A\left(\Omega_M\left( \ppl{\Vec{\omega}},\ppl{\Vec{\omega}_{uv}} \right)\right) = \frac{\beta^2}{6 \alpha M^2}$, which suggests that $K$-tuple models are more robust when $\alpha$ increases and $\beta$ decreases.
When does this happen?
As indicated by \cref{eq:alpha_and_beta}, $\alpha = 1+\sum\limits_{\substack{t=u+1\\t \neq v}}^{K} \frac{\ppl{\Vec{\omega}_{tu}}}{\ppl{\Vec{\omega}_{ut}}}$ and $\beta = \prod\limits_{\substack{l=1\\l \neq u}}^{K-1} \frac{1}{1+\sum\limits_{m=l+1}^{K} \frac{\ppl{\Vec{\omega}_{ml}}}{\ppl{\Vec{\omega}_{lm}}}}$. 
Given that the ratio $\frac{\ppl{\Vec{\omega}_{vu}}}{\ppl{\Vec{\omega}_{uv}}} = \exp\left(s_{\omega_v}-s_{\omega_u}\right)$ depends only on the difference of scores, changing $K$ does not cause changes in this ratio.
Therefore, as $K$ increases, more positive terms are added to $\alpha$, resulting in its increase.
Likewise, since $\beta$ is a product of $K$ constants lying between $0$ and $1$, an increase in $K$ results in a decrease of $\beta$. 
\emph{This finding suggests that for a fixed $M$, modeling preferences using longer tuples (i.e., increasing $K$) leads to smaller $M$-sensitive areas and yields more robust models.}
However, longer preference tuples come at the expense of higher data collection costs. 

\section{Experiments}\label{sec:experiments}

It is noteworthy that AI agents such as LLMs are not preference models, but are trained to align with explicit or implicit preference models.
Therefore, it is necessary to verify that the sensitivity presented in \cref{sec:analysis_of_preference_probs} also exists in trained AI agents, when they are trained using datasets with extremely dominant preferences.
Towards this goal, we use a set of three options $\Set{O}_{\text{a}} = \{\texttt{dog}, \texttt{cat}, \texttt{bird}\}$ to synthesize a series of datasets that contain pairwise preferences about animals in $\Set{O}_{\text{a}}$, with controllable distribution of preferences.
In our dataset, a sample contains a question like ``\texttt{Which one do you prefer, a $o_i$ or a $o_j$?}'', a chosen answer like ``\texttt{I prefer $o_w$.}'', and a rejected answer like ``\texttt{I prefer $o_l$.}'', where $o_i, o_j \in \Set{O}_{\text{a}}$ and $(o_w, o_l) \in \text{Perm}\left(\{o_i, o_j\}\right)$.
\cref{sec:experiment_details} details the templates for the questions and answers.

\begin{figure}[thbp]
    \centering
    \begin{subfigure}[b]{0.32\textwidth}
        \centering
        \includegraphics[width=\textwidth]{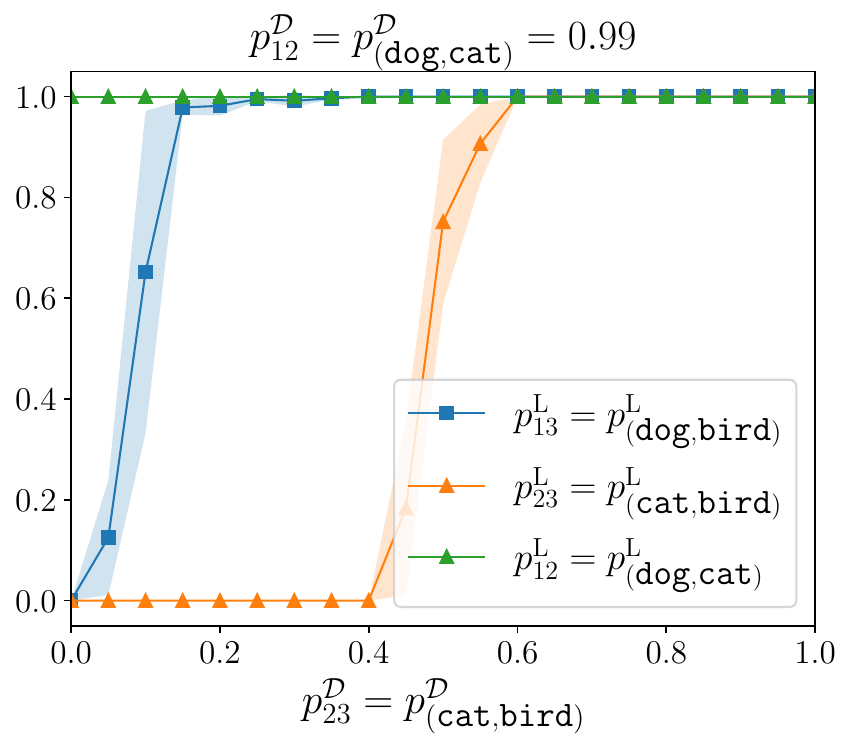}
        \caption{\label{fig:llama_1}$\Set{D}((\texttt{dog,cat,bird}))$}
    \end{subfigure}
    \hfill
    \begin{subfigure}[b]{0.32\textwidth}
        \centering
        \includegraphics[width=\textwidth]{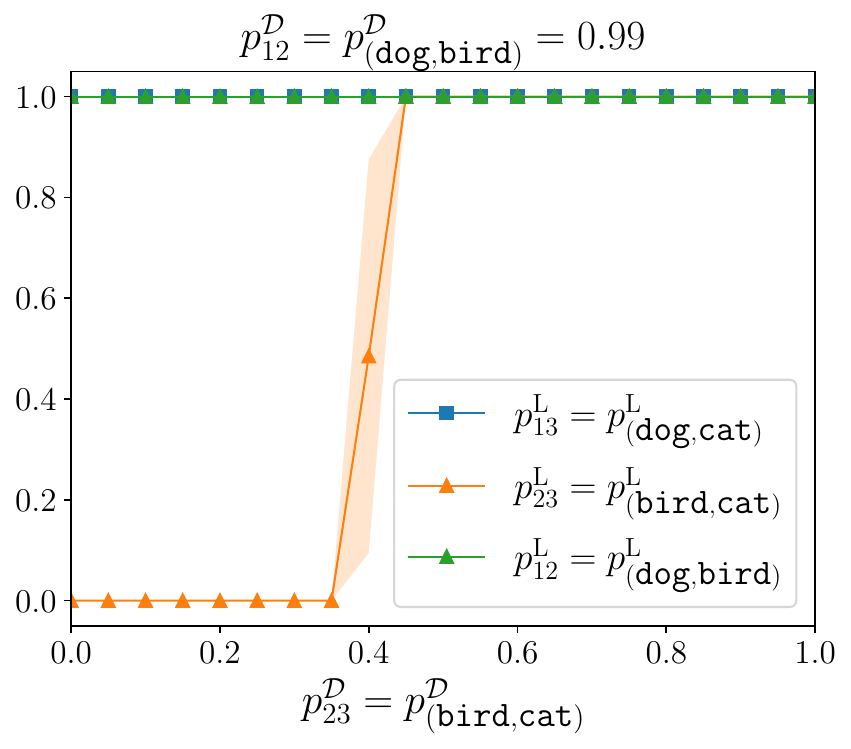}
        \caption{\label{fig:llama_2}$\Set{D}((\texttt{dog,bird,cat}))$}
    \end{subfigure}
    \hfill
    \begin{subfigure}[b]{0.32\textwidth}
        \centering
        \includegraphics[width=\textwidth]{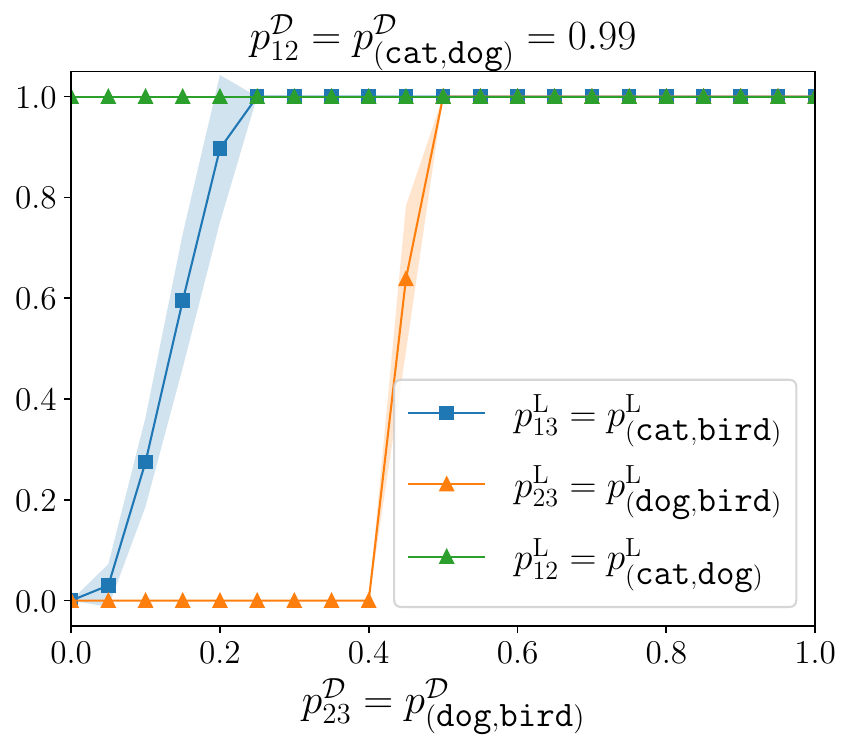}
        \caption{\label{fig:llama_3}$\Set{D}((\texttt{cat,dog,bird}))$}
    \end{subfigure}
    
    \vskip\baselineskip
    
    \begin{subfigure}[b]{0.32\textwidth}
        \centering
        \includegraphics[width=\textwidth]{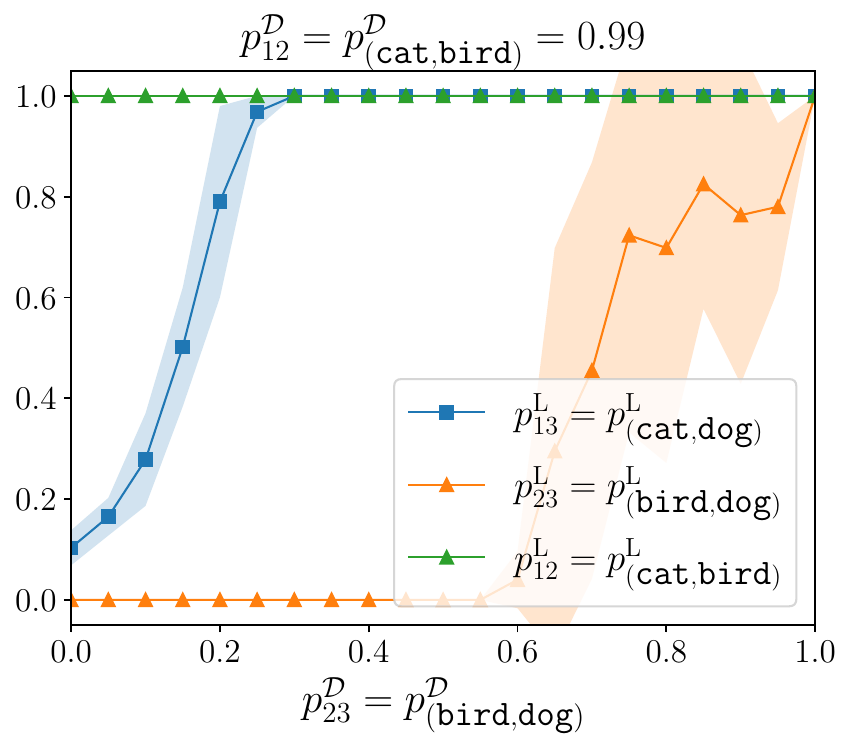}
        \caption{\label{fig:llama_4}$\Set{D}((\texttt{cat,bird,dog}))$}
    \end{subfigure}
    \hfill
    \begin{subfigure}[b]{0.32\textwidth}
        \centering
        \includegraphics[width=\textwidth]{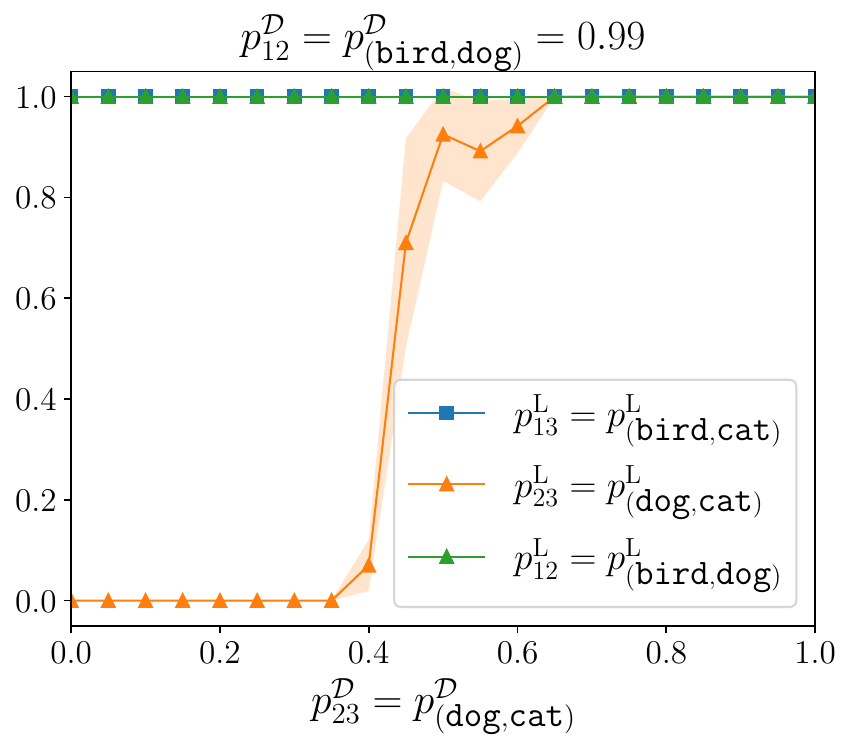}
        \caption{\label{fig:llama_5}$\Set{D}((\texttt{bird,dog,cat}))$}
    \end{subfigure}
    \hfill
    \begin{subfigure}[b]{0.32\textwidth}
        \centering
        \includegraphics[width=\textwidth]{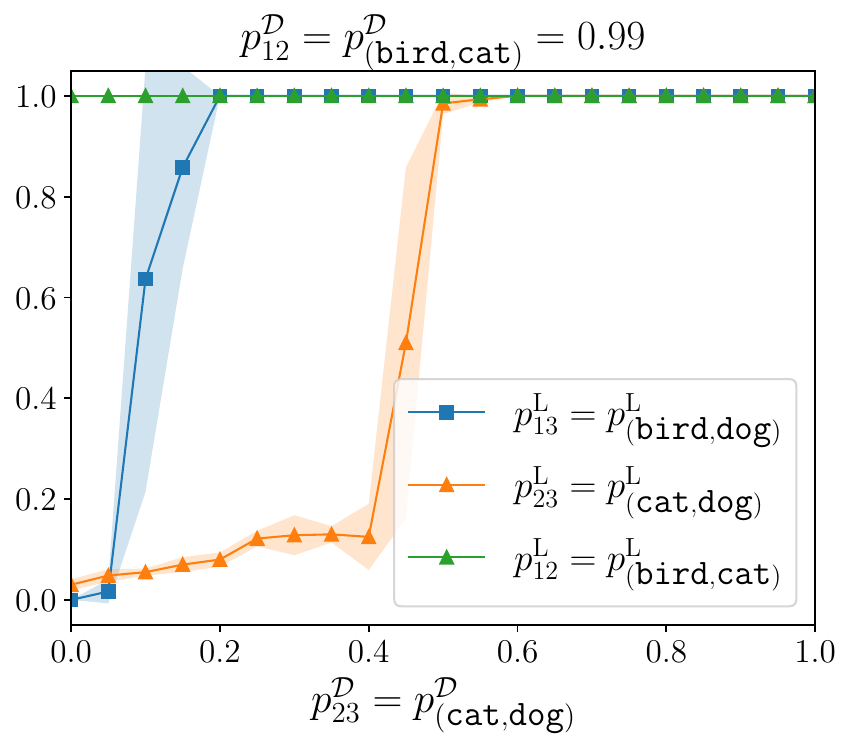}
        \caption{\label{fig:llama_6}$\Set{D}((\texttt{bird,cat,dog}))$}
    \end{subfigure}
    
    \caption{\label{fig:sensitivity_in_llama}
    Preferences of \texttt{Llama-3-8B-Instruct} after being trained on constructed datasets with dominant preferences.
    Each data point in the figure represents one model trained on a particular dataset $\Set{D}(\Vec{\omega}_a, p^{\text{\tiny D}}_{12}, p^{\text{\tiny D}}_{23})$.
    $p^{\text{\tiny L}}_*$ are preference probabilities learned by the model.
    Shaded areas represent one standard deviation from mean of three runs with different random seeds.
    $\triangle$ and $\square$ markers indicate probabilities that are specified and unspecified by the dataset, respectively.
    }
\end{figure}

A dataset $\Set{D}(\Vec{\omega}_a, p^{\Set{D}}_{12}, p^{\Set{D}}_{23})$ is synthesized based on three parameters: (1) $\Vec{\omega}_a = (\omega_{a1}, \omega_{a2}, \omega_{a3})$, a permutation of $\Set{O}$, (2) $p^{\Set{D}}_{12}$, the probability of $\omega_{a1}$ preferred over $\omega_{a2}$ in the dataset, and (3) $p^{\Set{D}}_{23}$, the probability of $\omega_{a2}$ preferred over $\omega_{a3}$ in the dataset.
The dataset contains no samples comparing $\omega_{a1}$ and $\omega_{a3}$, thus providing no explicit information about $p_{13}$.
For example, $\Set{{D}}\left((\texttt{dog}, \texttt{bird}, \texttt{cat}), 0.99, 0.01\right)$ is a dataset where $\texttt{dog}$ is preferred over $\texttt{bird}$ in $99\%$ of the samples, and $\texttt{bird}$ over $\texttt{cat}$ in $1\%$ of the samples, with no samples comparing $\texttt{dog}$ and $\texttt{cat}$ provided.
We generate datasets based on different permutations of $\Set{O}_\text{a}$ to avoid potential impacts of existing bias about animals in the target language model.

To examine the sensitivity of trained models when dominant preferences present in the dataset, we fix $p^{\Set{D}}_{12}=0.99$ and vary $p^{\Set{D}}_{23}$ from $0$ to $1$ with a step size of $0.05$ for all the datasets.
In each experiment, a \texttt{Llama-3-8B-Instruct}~\citep{meta-llamameta-llama-3-8b-instruct} model is trained on one of the datasets using the DPO algorithm; training is repeated three times with different random seeds.
After training, the model is tested for its preference on $p^{\text{L}}_{13}$ and $p^{\text{L}}_{23}$, where $p^{\text{L}}_*$ denotes the preference probabilities of the learned preference model. 
We query the trained model with question ``$\texttt{Do you prefer $o_1$ ($o_2$) or $o_3$?}$'' for 200 times under different random states and use the frequency of $o_1$ ($o_2$) being preferred to estimate $p^{\text{L}}_{13}$ and $p^{\text{L}}_{23}$.

\paragraph{Results.}
The result is illustrated in \cref{fig:sensitivity_in_llama}, from which two observations can be drawn.
First, the preference of trained model exhibits a significant shift in learned probabilities (from near 0 to near 1) despite comparatively minor changes in the distribution of training samples.
Second, models with identical values of $p^{\text{L}}_{13}$ ($p^{\text{L}}_{23}$) could exhibit substantially different $p^{\text{L}}_{23}$ ($p^{\text{L}}_{13}$).
These findings align with the implications discussed in \cref{sec:avoid_extreme_dominant_preferences}
On the other hand, we also notice that not all significant changes occur in $M$-sensitive regions.
We conjecture that this inconsistency may be attributed to factors such as biases in the LLMs and difficulties in optimization, and we defer further explorations of this issue to future work. 
Since our analysis only depend on preference probabilities of trained models rather than training details, such inconsistencies does not contradict our conclusions. 
Details of the experiments and extra results can be found in \cref{sec:experiment_details}.

\section{Related Works}
Value alignment for AI systems~\citep{Gabriel2021ChallengeVA,Ji2023AlignmentSurvey} is a critical challenge in ensuring that AI systems act in accordance with human values. 
Significant efforts in the field have focused on developing frameworks that can align the behavior of AI agents with human preferences~\citep{Wirth2017SurveyPBRL,Christiano2017RLHF,Stiennon2020LearningSHF,Ouyang2022InstructGPT,Dong2023RAFT,Huang2023TrustGPT,Duan2024Denevil}.
The Bradley-Terry model~\citep{Bradley1952BTModel} and the Plackett-Luce model~\citep{Luce1959IndividualCB,Plackett1975AnalysisPerm} have been a widely used framework for modeling pairwise preferences for value alignment.
Methods like RLHF~\citep{Ouyang2022InstructGPT}, for example, uses human preferences to train reward models under the Bradley-Terry model.
Direct policy optimization (DPO)~\citep{Rafailov2023DPO} has emerged more recently as an alternative approach, which integrates implicit preference modeling into policy training.
More recently, further improvements on preference optimization~\citep{Azar2024GeneralTPULHF,Xu2024ContrastivePO,Tajwar2024PreferenceFTLLM,liu2024statistical,Song2024preference} have been proposed.

The robustness of preference models has been studied in decision theory~\citep{maccheroni2006ambiguity,Ben2010SoftRobust,Guo2021StatisticalRobustnes}.
Reward hacking~\citep{skalse2022defining,Eisenstein2023helping,rafailov2024scaling} has also been shown to impact the quality of value alignment.
However, there has been limited study on the sensitivity of these models.

\section{Conclusion}
In this paper, we show that in popular preference models, the probability of a given preference could be sensitive to minor changes in other preference probabilities.
For the Bradley-Terry model and the Plackett-Luce model, we identify the situations where such sensitivity arises.
Experiments verify the existence of these sensitivities on LLMs trained with DPO.
Furthermore, we discuss implications of our findings on the robustness of value alignment.
Specifically, we suggest that (1) there is a trade-off between the robustness of value alignment and modeling dominant preferences, and (2) employing $K$-tuple preference models with $K \geq 3$ could mitigate the sensitivities in preference models and improve the robustness of value alignment.
The implications of our findings are not only relevant for value alignment, but also related to robustness of other systems that rely on preference models, such as LLM benchmarks like the Chatbot Arena~\citep{Lin2024ChatbotArena}.

\paragraph{Limitations.}
The analysis of this paper assumed that preference models strictly follow the mathematical definitions with \cref{assm:preference_depend_on_differences}-\ref{assm:cont_differentiable}.
Real-world agents, such as LLMs, are usually not preference models but are only trained with preference models.
Therefore, these agents may exhibit similar but not exact sensitivities predicted by the theoretical analysis.
Furthermore, the paper assumes a finite set of options, which is theoretically limited but probably less worrying in practice.
Finally, real-world human preferences could be non-transitive~\citep{Munos2024NashLearning} and not covered by Plackett-Luce models.
In that case, the results in this paper are not applicable.

\section{Acknowledgements}

The computational work for this article was partially performed on resources of the National Supercomputing Centre, Singapore (https://www.nscc.sg).

\section*{Statements}

\paragraph{Ethics Statements.}
This paper presents a sensitivity analysis on preference models used for value alignment in AI systems.
While the findings themselves are neutral, the identified sensitivity could potentially be exploited by malicious parties to compromise the preference modeling process, leading to undesirable outcomes in the AI systems that are intended to be value aligned.

\paragraph{Reproducibility Statements.}
The assumptions used for the sensitivity analysis are fully explained in the main text as.
All lemmas, theorems, and propositions are accompanied by proofs, either in the main text or in the appendix.
Implementation details for experiments have been provided in \cref{sec:experiments} and \cref{sec:experiment_details}.

\bibliography{bibliography}
\bibliographystyle{iclr2025_conference}

\newpage
\appendix
\setcounter{table}{0}
\renewcommand{\thetable}{\thesection\arabic{table}}
\setcounter{theorem}{0}
\renewcommand{\thetheorem}{\thesection\arabic{theorem}}
\setcounter{equation}{0}
\renewcommand{\theequation}{\thesection\arabic{equation}}
\setcounter{equation}{0}
\renewcommand{\thefigure}{\thesection\arabic{figure}}

\section{Notation and Terms}

\begin{table}[h]
    \centering
    \caption{Table of Notation and Terms.}
    \label{tab:notation_and_terms}
    \resizebox{\columnwidth}{!}{%
    \begin{tabular}{@{}ll@{}}
    \toprule
    \multicolumn{2}{c}{Functions} \\ \midrule
    $f, g, h$ & Real functions. \\
    \vspace{-0.5em} &  \\
    $g^{-1}$ & The inverse of function $g$. \\ \midrule
    \multicolumn{2}{c}{Data} \\ \midrule
    $\Set{O}$ & The set of options. \\
    \vspace{-0.5em} &  \\
    $o_i$ & The $i^{\text{th}}$ option. \\
    \vspace{-0.5em} &  \\
    $\text{Perm}(\Set{O}, K)$ & The set of all $K$-permutations of set $\Set{O}$. \\
    \vspace{-0.5em} &  \\
    $\Vec{\omega}$ & A preference represented by a $K$-tuple, which is a ranking of $K$ options from $\Set{O}$. \\
    \vspace{-0.5em} &  \\
    $\omega_i$ & The $i^{\text{th}}$ entry of tuple $\Vec{\omega}$. \\ \midrule
    \multicolumn{2}{c}{Probabilities} \\ \midrule
    $p_{\Vec{\omega}}$ & The probability of preference $\Vec{\omega}$ in general, without specifying any preference model. \\
    \vspace{-0.5em} &  \\
    $\ppr_{ij}$ & The probability of preference $(o_i, o_j)$, under a general pairwise preference model. \\
    \vspace{-0.5em} &  \\
    $\pbt_{ij}$ & The probability of preference $(o_i, o_j)$, under the Bradley-Terry model. \\
    \vspace{-0.5em} &  \\
    $\ppl{\Vec{\omega}}$ & The probability of preference $\Vec{\omega}$, under the $K$-tuple Plackett-Luce model. \\
    \vspace{-0.5em} &  \\
    $p^{\Set{D}}_{\Vec{\omega}}$ & The probability of preference $\Vec{\omega}$ in the dataset $\Set{D}$. \\
    \vspace{-0.5em} &  \\
    $p^{\text{L}}_{\Vec{\omega}}$ & The probability of preference $\Vec{\omega}$ learned by a preference model. \\ \midrule
    \multicolumn{2}{c}{Terms} \\ \midrule
    $M$-sensitivity & Defined in \cref{def:m_sensitivity}. \\
    \vspace{-0.5em} &  \\
    $M$-sensitive region & Defined in \cref{def:m_sensitive_region}. \\ \bottomrule
    \end{tabular}%
    }
    \end{table}

\section{Proof}

\subsection{Proof of \cref{cor:bt_m_sensitive_area}}\label{sec:proof_of_bt_sensitive_area}

\begin{corollary*}(Replica of \cref{cor:bt_m_sensitive_area})
    The area of $\Omega_M\left(\pbt_{ij},\pbt_{ik}\right)$ for $M>1$ is
    \begin{equation*}
        A\left(\Omega_M\left(\pbt_{ij},\pbt_{ik}\right)\right) = \frac{1}{2} \ln\left(\frac{M-1}{M+1}\right) + \frac{1}{2\sqrt{M}}\ln\left(\frac{\sqrt{M}+1}{\sqrt{M}-1}\right).
    \end{equation*}
\end{corollary*}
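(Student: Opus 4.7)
From \cref{eq:bt_sensitive_region}, $\Omega_M(\pbt_{ij}, \pbt_{ik})$ decomposes as the disjoint union of Cases 1 and 2. The first step is to argue, by symmetry, that these two regions contribute equal area, so the total is twice the area of Case 1. By \cref{lemma:inverse_symmetry_around_one} we have $g^{-1}(1-x) = -g^{-1}(x)$, which combined with \cref{eq:unknown_is_a_function_of_known} and \cref{assm:pairwise_symmetry} gives the functional identity $\pbt_{ij}(1-p, 1-q) = 1 - \pbt_{ij}(p, q)$, viewing $\pbt_{ij}$ as a function of $(p, q) = (\pbt_{ik}, \pbt_{kj})$. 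Differentiating shows that $\lvert \partial \pbt_{ij}/\partial \pbt_{ik} \rvert$ is invariant under the involution $(p, q) \mapsto (1-p, 1-q)$, and one checks that this involution carries Case 1 bijectively onto Case 2.

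Next I would carry out the Case 1 integral. The inner integral over $\pbt_{ik} \in (\gamma_0, 1)$ trivially gives the width $1 - \gamma_0(\pbt_{kj})$, leaving a one-dimensional integral in $\pbt_{kj}$ over $(0, 1/(M+1))$. The substitution $\pbt_{kj} = 1/(1 + Mt^2)$, equivalently $t = \sqrt{(1/\pbt_{kj} - 1)/M}$, is tailor-made for the form of $\gamma_0$: a short algebraic simplification gives $1 - \gamma_0 = (t-1)/(Mt^2 - 1)$ and $d\pbt_{kj} = -2Mt/(1+Mt^2)^2 \, dt$, with endpoints $\pbt_{kj} \to 0 \Rightarrow t \to \infty$ and $\pbt_{kj} = 1/(M+1) \Rightarrow t = 1$. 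The area of Case 1 then becomes
\begin{equation*}
\int_1^\infty \frac{2Mt(t-1)}{(Mt^2-1)(Mt^2+1)^2} \, dt.
\end{equation*}

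I would split the numerator as $2Mt(t-1) = 2Mt^2 - 2Mt$, which decouples the integrand into two standard rational pieces. For the $-2Mt$ piece the substitution $v = Mt^2$ reduces it to $-\int_M^\infty dv/((v-1)(v+1)^2)$; for the $2Mt^2$ piece the substitution $w = \sqrt{M}\,t$ reduces it to $(1/\sqrt{M})\int_{\sqrt{M}}^\infty 2w^2 \, dw/((w^2-1)(w^2+1)^2)$. Both integrands admit routine partial-fraction decompositions, and each antiderivative splits into a logarithmic part, which supplies one of the two target log terms when evaluated at the lower limit, plus a simple rational remainder.

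The main obstacle is the final bookkeeping: verifying that the two non-logarithmic remainders cancel once the $1/\sqrt{M}$ prefactor is applied. Both evaluate at their lower limits to $\pm 1/(2(M+1))$ with opposite signs, so they cancel exactly; otherwise the answer would contain a spurious $1/(M+1)$ term. After this cancellation only the logarithmic contributions $\tfrac{1}{4}\ln((M-1)/(M+1))$ and $\tfrac{1}{4\sqrt{M}}\ln((\sqrt{M}+1)/(\sqrt{M}-1))$ survive, and doubling to account for Case 2 yields the claimed formula.
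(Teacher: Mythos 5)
Your proposal is correct, and I verified the computation: the Case 1 integral $\int_1^\infty \frac{2Mt(t-1)}{(Mt^2-1)(Mt^2+1)^2}\,\text{d}t$ is right, the partial fractions give $\frac{1}{(v-1)(v+1)^2}=\frac{1/4}{v-1}-\frac{1/4}{v+1}-\frac{1/2}{(v+1)^2}$ and (after the $\arctan$ terms cancel) an antiderivative $\frac{1}{4}\ln\frac{w-1}{w+1}+\frac{w}{2(w^2+1)}$ for the second piece, and the two rational remainders $\pm\frac{1}{2(M+1)}$ do cancel as you claim. The overall structure matches the paper's proof --- double one case by symmetry, reduce to a one-dimensional integral, split into two pieces whose logarithmic parts survive and whose rational parts cancel --- and in fact your two pieces evaluate to exactly the paper's intermediate quantities $\frac{1}{4}\ln\frac{M-1}{M+1}+\frac{1}{2(M+1)}$ and $\frac{1}{4\sqrt{M}}\ln\frac{\sqrt{M}+1}{\sqrt{M}-1}-\frac{1}{2(M+1)}$. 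The difference is purely in integration technique: the paper splits the integrand \emph{before} substituting, handles the rational part directly, and kills the square root with the trigonometric substitution $x=\cos^2\theta$; you rationalize the whole integrand up front with $t=\sqrt{(1/\pbt_{kj}-1)/M}$ and then run partial fractions on both pieces. Your route avoids trigonometry entirely and keeps everything algebraic, at the cost of a slightly heavier partial-fraction computation; you are also more explicit than the paper about \emph{why} the two cases have equal area (the involution $(p,q)\mapsto(1-p,1-q)$ preserving $\left|\pdl{\pbt_{ij}}{\pbt_{ik}}\right|$ and swapping the cases), where the paper simply asserts symmetry.
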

\begin{proof}
By \cref{def:m_sensitive_region},
$$A\left(\Omega_M\left(\pbt_{ij},\pbt_{ik}\right)\right) = \iint\limits_{\Omega_M\left(\pbt_{ij},\pbt_{ik}\right)} 1 \ \text{d}\pbt_{ij} \text{d}\pbt_{ik}.$$

Recall that \cref{eq:bt_sensitive_region} defines $\Omega_{M}(\pbt_{ij},\pbt_{ik})$ as:
\begin{align*}
    & \text{Case 1:} \ 0 < \pbt_{kj} < \frac{1}{1+M}, \ \gamma_0 < \pbt_{ik} < 1, \nonumber \\
    & \text{Case 2:} \ 1-\frac{1}{1+M} < \pbt_{kj} < 1, \ 0 < \pbt_{ik} < \gamma_0, \text{where} \ \gamma_0 = 1 - \frac{\sqrt{\frac{\left(1/\pbt_{kj}-1\right)}{M}}-1}{1/\pbt_{kj}-2}.
\end{align*}
Due to symmetry, case 1 and case 2 have the same area.
Therefore, $A(\Omega_{M}(\pbt_{ij},\pbt_{ik}))$, denoted $\left|\Omega\right|$ for conciseness, is simply twice the area of case 2 (or case 1):
\begin{equation*}
    \left|\Omega\right| = 2 \times \int^1_{\frac{M}{M+1}} 1 - \frac{\sqrt{\frac{\left(1/\pbt_{kj}-1\right)}{M}}-1}{1/\pbt_{kj}-2} \text{d}\pbt_{kj}, \ M>1.
\end{equation*}
Let $x = \pbt_{kj}$.
Some algebra gives the following:
\begin{equation}\label{eq:omega_bt_part_0}
    \left|\Omega\right| = 2 \times \left( \int^1_{\frac{M}{M+1}} \frac{x-1}{2x-1} \text{d}x + \int^1_{\frac{M}{M+1}} \frac{\sqrt{\frac{\left(1-x\right)x}{M}}}{2x-1} \text{d}x \right) = 2 \times \left(\left|\Omega\right|_1 + \left|\Omega\right|_2 \right).
\end{equation}
$\left|\Omega\right|_1$ can be easily computed as
\begin{equation}\label{eq:omega_bt_part_1}
    \left|\Omega\right|_1 = \frac{1}{4} \ln{\left( \frac{M-1}{M+1} \right)} + \frac{1}{2(1+M)}.
\end{equation}
To work out $\left|\Omega\right|_2$, let $x=\cos^2\theta$, then 
\begin{equation*}
    \left|\Omega\right|_2 = \frac{1}{\sqrt{M}} \int^{0}_{\theta_0} \frac{\sin\theta \cos\theta}{2 \cos^2\theta - 1} \times (-2 \sin\theta \cos\theta) \text{d}\theta = \frac{1}{\sqrt{M}} \int^{\theta_0}_0 \frac{2 \sin^2\theta \cos^2\theta}{2 \cos^2\theta - 1} \text{d}\theta,
\end{equation*}
where $\theta_0 = \arcsin\sqrt{\frac{1}{M+1}}$.

Since $2 \cos^2\theta-1 = \cos2\theta$ and $\sin\theta \cos\theta = \frac{1}{2} \sin2\theta$, we have
\begin{align}\label{eq:final_step_of_omega_2}
    \left|\Omega\right|_2 &= \frac{1}{2\sqrt{M}} \int^{\theta_0}_0 \frac{\sin^2 2\theta}{\cos 2\theta} \text{d}\theta = \frac{1}{2\sqrt{M}} \int^{\theta_0}_0 \frac{1-\cos^2 2\theta}{\cos 2\theta} \text{d}\theta = \frac{1}{2\sqrt{M}} \int^{\theta_0}_0 \sec 2\theta -\cos 2\theta \ \text{d}\theta \nonumber \\
    &= \frac{1}{4\sqrt{M}} \left. \left( \frac{1}{2} \ln \frac{1+\sin 2\theta}{1-\sin 2\theta} - \sin 2\theta \right) \right|^{\theta_0}_0.
\end{align}
Now that $\sin\theta_0 = \sqrt{\frac{1}{M+1}}$, we know $\cos \theta_0 = \sqrt{1 - \sin^2\theta_0} = \sqrt{\frac{M}{M+1}}$ and $\sin 2\theta_0 = 2 \cos\theta_0 \sin\theta_0 = \frac{2\sqrt{M}}{M+1}$.
Substituting into \cref{eq:final_step_of_omega_2}, we get
\begin{equation}\label{eq:omega_bt_part_2}
    \left|\Omega\right|_2 = \frac{1}{4 \sqrt{M}} \ln \frac{\sqrt{M}+1}{\sqrt{M}-1} - \frac{1}{2(M+1)}.
\end{equation}
Substituting \cref{eq:omega_bt_part_1} and \cref{eq:omega_bt_part_2} into \cref{eq:omega_bt_part_0}, we get $\left|\Omega\right| = \frac{1}{2} \ln\left(\frac{M-1}{M+1}\right) + \frac{1}{2\sqrt{M}}\ln\left(\frac{\sqrt{M}+1}{\sqrt{M}-1}\right)$, which proves \cref{cor:bt_m_sensitive_area}.
\end{proof}

\subsection{Proof of \cref{cor:k_tuple_area_of_sensitivity}}\label{sec:proof_of_pl_sensitive_area}

\begin{corollary*}(Replica of \cref{cor:k_tuple_area_of_sensitivity})
    $A\left(\Omega_M\left( \ppl{\Vec{\omega}},\ppl{\Vec{\omega}_{uv}} \right)\right) = \frac{\beta^2}{6 \alpha M^2}$; $A\left(\Omega_M\left( \ppl{\Vec{\omega}},\ppl{\Vec{\omega}_{vu}} \right)\right) = \frac{\beta^2}{6 \alpha^3 M^2}$.
\end{corollary*}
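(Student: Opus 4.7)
The plan is to prove both identities by direct integration over the $M$-sensitive regions already explicitly described in Equation~(\ref{eq:pl_sensitive_area}) (and its analogue for $\ppl{\Vec{\omega}_{vu}}$). Since the outer variable ranges over $(0,\beta/(4\alpha M))$ and, for each fixed value of it, the inner variable ranges over an interval of length $2\gamma_2$ (respectively $2\eta_2$), the area collapses to a single integral of the form $\int_0^{\beta/(4\alpha M)} 2\gamma_2 \, d\ppl{\Vec{\omega}_{uv}}$ (and analogously for the second identity). The key observation is that $\alpha$ and $\beta$, as defined in Equation~(\ref{eq:alpha_and_beta}), do not depend on $\ppl{\Vec{\omega}_{uv}}$ or $\ppl{\Vec{\omega}_{vu}}$ but only on the other pairs $\ppl{\Vec{\omega}_{tu}}/\ppl{\Vec{\omega}_{ut}}$, so they can be treated as constants during integration.

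For the first identity, I would substitute the explicit expression for $\gamma_2$ to obtain
\begin{equation*}
A\left(\Omega_M\left(\ppl{\Vec{\omega}}, \ppl{\Vec{\omega}_{uv}}\right)\right) = \int_0^{\beta/(4\alpha M)} \frac{\sqrt{\beta\bigl(\beta - 4\alpha M\, x\bigr)}}{M}\, dx,
\end{equation*}
where $x = \ppl{\Vec{\omega}_{uv}}$. Applying the substitution $u = \beta - 4\alpha M x$, so that $du = -4\alpha M \, dx$, the integral becomes $\frac{\sqrt{\beta}}{4\alpha M^2}\int_0^\beta \sqrt{u}\, du = \frac{\sqrt{\beta}}{4\alpha M^2} \cdot \frac{2}{3}\beta^{3/2} = \frac{\beta^2}{6\alpha M^2}$, which matches the claimed value.

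For the second identity, the same method applies, with $\eta_2$ in place of $\gamma_2$. The integrand carries an extra factor of $1/\alpha^2$ coming from the denominator of $\eta_2$, while the interval of integration in $\ppl{\Vec{\omega}_{vu}}$ remains $(0,\beta/(4\alpha M))$, so the computation proceeds identically and yields $\frac{\beta^2}{6\alpha^3 M^2}$.

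The main obstacle is almost purely bookkeeping rather than conceptual: one must carefully verify that the $M$-sensitive region described in Equation~(\ref{eq:pl_sensitive_area}) is well defined (i.e., that the inner square root is nonnegative) exactly on the stated range of the outer variable, and that the boundary values $\gamma_1 \pm \gamma_2$ and $\eta_1 \pm \eta_2$ remain in $(0,1)$ so the region stays inside the valid probability simplex. Once this is confirmed, each area integral is a single-variable computation and the result follows from the elementary substitution above.
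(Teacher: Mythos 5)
Your proposal is correct and follows essentially the same route as the paper: integrate the indicator of the region in \cref{eq:pl_sensitive_area} by collapsing the inner variable to an interval of length $2\gamma_2$ (resp.\ $2\eta_2$) and evaluating the resulting one-dimensional integral over $\bigl(0,\beta/(4\alpha M)\bigr)$, treating $\alpha$ and $\beta$ as constants. Your explicit substitution in fact lands cleanly on the stated value $\frac{\beta^2}{6\alpha M^2}$, whereas the paper's displayed antiderivative and final line drop a factor of $M$ (writing $\frac{\beta^2}{6\alpha M}$); this is a typo in the paper's computation, not a defect in your argument.
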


We only prove $\Omega_M\left( \ppl{\Vec{\omega}},\ppl{\Vec{\omega}_{uv}}\right) = \frac{\beta^2}{6 \alpha M}$; $\Omega_M\left( \ppl{\Vec{\omega}},\ppl{\Vec{\omega}_{vu}}\right)$ can be proved similarly.
Recall that \cref{eq:pl_sensitive_area} defines $\Omega_M\left( \ppl{\Vec{\omega}},\ppl{\Vec{\omega}_{uv}}\right)$ as
\begin{equation}
    0 < \ppl{\Vec{\omega}_{uv}} < \frac{\beta}{4 \alpha M}, \ 
    \gamma_1 - \gamma_2 < \ppl{\Vec{\omega}_{vu}} < \gamma_1 + \gamma_2,
\end{equation}
where $ \gamma_1 = \frac{\beta-2 \alpha M \ppl{\Vec{\omega}_{uv}}}{2 M} $ and $ \gamma_2 = \frac{\sqrt{\beta\left(\beta-4\alpha M \ppl{\Vec{\omega}_{uv}}\right)}}{2 M} $.

Therefore,
\begin{align}
    A\left(\Omega_M\left( \ppl{\Vec{\omega}},\ppl{\Vec{\omega}_{uv}}\right)\right) &= \int_{0}^{\frac{\beta}{4 \alpha M}} \int_{\gamma_1-\gamma_2}^{\gamma_1+\gamma_2} \ 1 \ \text{d} \ppl{\Vec{\omega}_{vu}} \text{d} \ppl{\Vec{\omega}_{uv}} \nonumber \\
    &= \int_0^{\frac{\beta}{4 \alpha M}} 2 \gamma_2 \text{d} \ppl{\Vec{\omega}_{uv}} \nonumber \\
    &=  \left. -\frac{\left(\beta \left(\beta - 4 \alpha \ppl{\Vec{\omega}_{uv}} \right)\right)^{3/2}}{6 \alpha \beta M} \right|_0^{\frac{\beta}{4 \alpha M}} \nonumber \\
    &= \frac{\beta^2}{6 \alpha M},
\end{align}
which proves \cref{cor:k_tuple_area_of_sensitivity}.

\subsection{Full Proof of \cref{cor:k_is_better_than_pair}}\label{sec:full_proof_of_k_is_better_than_pair}

\begin{theorem*}(Replica of \cref{cor:k_is_better_than_pair})
    Let $\pbt_{ij}$, $\pbt_{ik}$ be the probabilities of $(o_i,o_j)$ and $(o_i,o_k)$ respectively under the Bradley-Terry model.
    Let $\ppl{\Vec{\omega}}$, and $\ppl{\Vec{\omega}_{uv}}$ be the probabilities of preference $\Vec{\omega}$ and $\Vec{\omega}_{uv}$ under a $K$-tuple Plackett-Luce model.
    Then $\forall M>1$ and $K > 2$, 
    $$A\left(\Omega_M\left(\pbt_{ij},\pbt_{ik}\right)\right) > A\left(\Omega_M\left( \ppl{\Vec{\omega}},\ppl{\Vec{\omega}_{uv}} \right)\right).$$
\end{theorem*}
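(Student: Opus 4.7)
The plan is to first reduce the claim to a universal bound that does not involve $\alpha$ or $\beta$, and then prove that bound via a Taylor-series argument.

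Observe from \cref{eq:alpha_and_beta} that $\alpha > 1$ (it is $1$ plus a sum of positive ratios) and $\beta \in (0, 1)$ (it is a product of factors of the form $1/(1+\text{positive})$), so $\beta^2/\alpha < 1$. Thus it suffices to prove the stronger, clean inequality
\begin{equation*}
L := A\!\left(\Omega_M(\pbt_{ij}, \pbt_{ik})\right) > \frac{1}{6M^2} \quad \text{for all } M > 1.
\end{equation*}

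Following the paper's sketch, I would use $\tanh^{-1}(x) = \tfrac12 \ln\tfrac{1+x}{1-x}$ to rewrite $L = \sqrt{u}\,\tanh^{-1}(\sqrt{u}) - \tanh^{-1}(u)$ with $u := 1/M \in (0,1)$, then apply the Taylor expansion $\tanh^{-1}(x) = \sum_{k\ge 0} x^{2k+1}/(2k+1)$ to both terms. The two series converge absolutely on $(0,1)$, so I can subtract coefficient-wise to obtain $L = \sum_{n\ge 2} c_n u^n$ with $c_{2k} = 1/(4k-1) > 0$ and $c_{2k+1} = -2k/[(2k+1)(4k+1)] < 0$ for $k \ge 1$; the $u^1$ contributions cancel.

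I would then group the series into pairs $u^{2k}\bigl[c_{2k} + c_{2k+1} u\bigr]$ for $k \ge 1$. The bracket is linear and decreasing in $u$, so its minimum on $[0,1]$ occurs at $u=1$, where a short calculation gives $c_{2k} + c_{2k+1} = (8k+1)/[(4k-1)(2k+1)(4k+1)] > 0$. Hence every pair is non-negative, so $L$ is bounded below by its $k=1$ pair, namely $u^2(1/3 - 2u/15) = u^2(5-2u)/15$. Finally, $(5-2u)/15 > 1/6$ iff $u < 5/4$, which holds throughout $(0,1)$, yielding $L > u^2/6 = 1/(6M^2) > \beta^2/(6\alpha M^2)$ as required.

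The main obstacle I expect is the pairing step: it rests on the observation that the first pair alone already exceeds $u^2/6$ on the whole interval. Near $u \to 1$ (equivalently $M \to 1^+$) the higher-order terms contribute very little individually, so without this observation the bound would be awkward to close. The identity $c_{2k}+c_{2k+1} = (8k+1)/[(4k-1)(2k+1)(4k+1)]$ is the key piece of algebra that makes positivity of each subsequent pair transparent and rules out tail cancellation from the negative odd-index coefficients.
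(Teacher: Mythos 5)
Your proposal is correct and follows essentially the same route as the paper's proof: both reduce to showing $L > 1/(6M^2)$ via $\alpha>1$, $0<\beta<1$, rewrite $L$ using $\tanh^{-1}$, Taylor-expand in $u=1/M$, pair each even-power term with the following odd-power term (the positivity check $c_{2k}+c_{2k+1}>0$ is exactly the paper's inequality $\tfrac{1}{4n-1}>\tfrac{2n}{(4n+1)(2n+1)}$, both reducing to $8k+1>0$), and conclude from the first pair. The only cosmetic difference is that you keep the factor $(5-2u)/15$ where the paper bounds $1/M^3$ by $1/M^2$ to get $1/(5M^2)$; the argument is the same.
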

\begin{proof}
    Let $L = A\left(\Omega_M\left(\pbt_{ij},\pbt_{ik}\right)\right) = \frac{1}{2} \ln\left(\frac{M-1}{M+1}\right) + \frac{1}{2\sqrt{M}}\ln\left(\frac{\sqrt{M}+1}{\sqrt{M}-1}\right).$
    Note that $\tanh^{-1}\left(x\right) = \frac{1}{2} \frac{\ln\left(x+1\right)}{\ln\left(x-1\right)},$ so $L = -\tanh^{-1}\left(\frac{1}{M}\right)+\frac{1}{\sqrt{M}}\tanh^{-1}\left(\frac{1}{\sqrt{M}}\right)$.
    
    Further note that $\tanh^{-1}(\frac{1}{M}) = \sum_{k=0}^{\infty}\frac{(1/M)^{2k+1}}{2k+1}$. 
    Therefore,
    \begin{align}\label{eq:expanded_L}
        L &= \sum_{k=0}^{\infty}-\frac{(1/M)^{2k+1}}{2k+1} + \frac{1}{\sqrt{M}} \sum_{k=0}^{\infty}\frac{(1/\sqrt{M})^{2k+1}}{2k+1} \nonumber \\
        &= \sum_{k=0}^{\infty}-\frac{(1/M)^{2k+1}}{2k+1} + \sum_{k=0}^{\infty}\frac{(1/M)^{k+1}}{2k+1} \nonumber \\
        &= \sum_{k'=1}^{\infty} \left( \frac{(1/M)^{k'}}{2k'-1} - \frac{(1/M)^{2k'-1}}{2k'-1} \right) \nonumber \\
        &= \sum_{n=1}^{\infty} \left( \frac{1}{4n-1}\frac{1}{M^{2n}} - \frac{2n}{(4n+1)(2n+1)}\frac{1}{M^{2n+1}} \right) = \sum_{n=1}^{\infty} L_n.
    \end{align}

    Since $M>1$, we have $$L_n = \frac{1}{4n-1}\frac{1}{M^{2n}} - \frac{2n}{(4n+1)(2n+1)}\frac{1}{M^{2n+1}} > \frac{1}{4n-1}\frac{1}{M^{2n}} - \frac{2n}{(4n+1)(2n+1)}\frac{1}{M^{2n}} > 0.$$
    By only keeping the first term of \cref{eq:expanded_L}, and note that $\alpha>1$ and $0<\beta<1$, we get
    $$L > L_1 = \frac{1}{3}\frac{1}{M^2} - \frac{2}{15}\frac{1}{M^3} > \frac{1}{3}\frac{1}{M^2} - \frac{2}{15}\frac{1}{M^2} = \frac{1}{5}\frac{1}{M^2} > \frac{1}{6}\frac{1}{M^2} > \frac{\beta^2}{6 \alpha M^2}.$$
\end{proof}

\section{Experiment Details}\label{sec:experiment_details}

\subsection{Implementation}

Training and inference of large language models uses two NVIDIA-A100 GPUs each with 40 gigabytes of video memory.
An 8-bit version of the AdamW optimizer~\citep{Loshchilov2019Decoupled} provided by the Hugging Face's \texttt{Bitsandbytes} package~\citep{huggingfaceAdamW} is used to train the LLMs.
In each experiment session, an LLM is trained using DPO for one epoch with learning rate set to $5\times10^{-6}$ and temperature of DPO loss $\beta'$ set to $0.1$.
Note that these details are not important for reproducibility, as the purpose of training is to fit LLMs to the dominant preferences in the dataset.
The sensitivity should arise whenever LLMs exhibit strong preferences, regardless of how they were trained.

\subsection{Sample Generation}\label{sec:templates_for_dataset_gen}

We generate a series of datasets $\Set{D}(\Vec{\omega}_a, p^{\Set{D}}_{12}, p^{\Set{D}}_{23})$ based on $\Set{O}_{\text{a}} = \{\texttt{dog}, \texttt{cat}, \texttt{bird}\}$, where $\Vec{\omega}_a$ is one of the six permutations of $\Set{O}_\text{a}$, $p^{\Set{D}}_{12}=0.99$, and $p^{\Set{D}}_{23}$ varies from $0$ to $1$ with a step size of $0.05$.
Each sample in a generated dataset contains (1) a question, (2) a chosen answer, and (3) a rejected answer.
When generating a sample for a specific dataset $\Set{D}$, we first randomly sample a pair of options $(o_i,o_j)$ from $\Set{O}_{\text{a}}$.
Then, we sample from $\text{Bernoulli}(p^{\Set{D}}_{ij})$ to determine, for this sample, whether the chosen answer prefers $o_i$ or $o_j$.
The rejected answer will be set to express the opposite preference as the chosen one.
We use the following templates to generate questions and answers, where \texttt{<A>} and \texttt{<B>} are replaced with the actual options.

\begin{mdframedtitle}{Question Templates}
    \begin{lstlisting}
 1. "If you had to choose between <A> and <B>, which would you prefer?",
 2. "Would you rather have <A> or <B>?",
 3. "Given the choice of <A> and <B>, which one appeals to you more?",
 4. "Between <A> and <B>, which would you be more likely to select?",
 5. "If you could only pick one, would you go for <A> or <B>?",
 6. "When deciding between <A> and <B>, which would you favor?",
 7. "In your opinion, is <A> or <B> the better option?",
 8. "Faced with <A> and <B> as alternatives, which would you lean towards?",
 9. "If you were presented with <A> and <B>, which would you gravitate to?",
10. "Weighing the merits of <A> against <B>, which comes out on top for you?",
11. "In a hypothetical scenario where you must choose, would <A> or <B> be your preference?",
12. "If forced to decide, would you opt for <A> or <B>?",
13. "Considering the pros and cons, which do you find more appealing: <A> or <B>?",
14. "If <A> and <B> were your only options, which would you choose?",
15. "When comparing <A> to <B>, which one stands out as more desirable to you?",
16. "In a situation where you can't have both, would you prioritize <A> or <B>?",
17. "If you had to advocate for either <A> or <B>, which would you support?",
18. "Imagining a world with only <A> or <B>, which would you want to exist?",
19. "If you could only choose one, would it be <A> or <B>?",
20. "When push comes to shove, would you side with <A> or <B>?"
    \end{lstlisting}
\end{mdframedtitle}

\begin{mdframedtitle}{Answer Templates}
    \begin{lstlisting}
 1. "I prefer <A> over <B>.",
 2. "I would choose <A> rather than <B>.",
 3. "<A> appeals to me more than <B>.",
 4. "I just prefer <A>.",
 5. "I'm more drawn to <A> than <B>.",
 6. "If I had to pick, I'd go with <A> over <B>.",
 7. "<A> is my preferred choice when compared to <B>.",
 8. "I find <A> to be a better option than <B>.",
 9. "I tend to favor <A> when deciding between <A> and <B>.",
10. "<A> is more attractive to me than <B>.",
11. "I lean towards <A> when considering <A> and <B>.",
12. "I simply like <A> better than <B>.",
13. "I would be more likely to select <A> over <B>.",
14. "Between <A> and <B>, <A> comes out on top for me.",
15. "I gravitate more towards <A> than <B>.",
16. "Given the options, I'd opt for <A> instead of <B>.",
17. "My preference lies with <A> rather than <B>.",
18. "I'm inclined to choose <A> over <B>.",
19. "In my opinion, <A> outweighs <B>.",
20. "<A> resonates with me more than <B>.",
21. "I'd prioritize <A> over <B> if I had to make a choice.",
22. "When weighing <A> against <B>, I find <A> more appealing.",
23. "I'm more partial to <A> than <B>.",
24. "If forced to decide, I'd side with <A> over <B>.",
25. "<A> holds more appeal for me compared to <B>.",
26. "I'd be more satisfied with <A> than <B>.",
27. "My inclination is towards <A> rather than <B>.",
28. "I see more value in <A> than in <B>.",
29. "Given the choice, I'd go for <A> instead of <B>.",
20. "I have a stronger affinity for <A> than for <B>."
    \end{lstlisting}
\end{mdframedtitle}

\subsection{More Results on \texttt{zephyr-7b-alpha}}

Here we present results for experiments where \texttt{zephyr-7b-alpha} model~\citep{Tunstall2023Zephyr,huggingfaceZephyr7BAlpha} is trained under the same settings as described in \cref{sec:experiments}.
The results are presented in \cref{fig:sensitivity_in_zephyr}.
Similar to the results obtained for the \texttt{Llama-3-8B-Instruct} model, significant shifts in $p^{\text{L}}_{13}$ ($p^{\text{L}}_{23}$) are observed for models with similar $p^{\text{L}}_{23}$ ($p^{\text{L}}_{13}$).
This further confirms that the sensitivity is not an issue of specific LLMs, but caused by the dominant preferences.

\begin{figure}[thbp]
    \centering
    \begin{subfigure}[b]{0.32\textwidth}
        \centering
        \includegraphics[width=\textwidth]{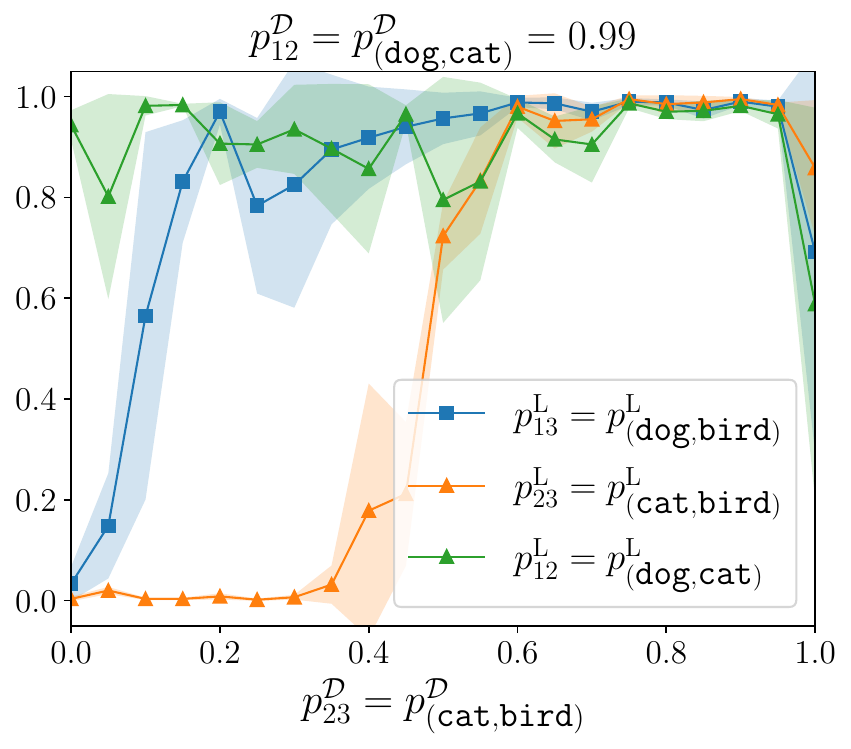}
        \caption{$\Set{D}((\texttt{dog,cat,bird}))$}
    \end{subfigure}
    \hfill
    \begin{subfigure}[b]{0.32\textwidth}
        \centering
        \includegraphics[width=\textwidth]{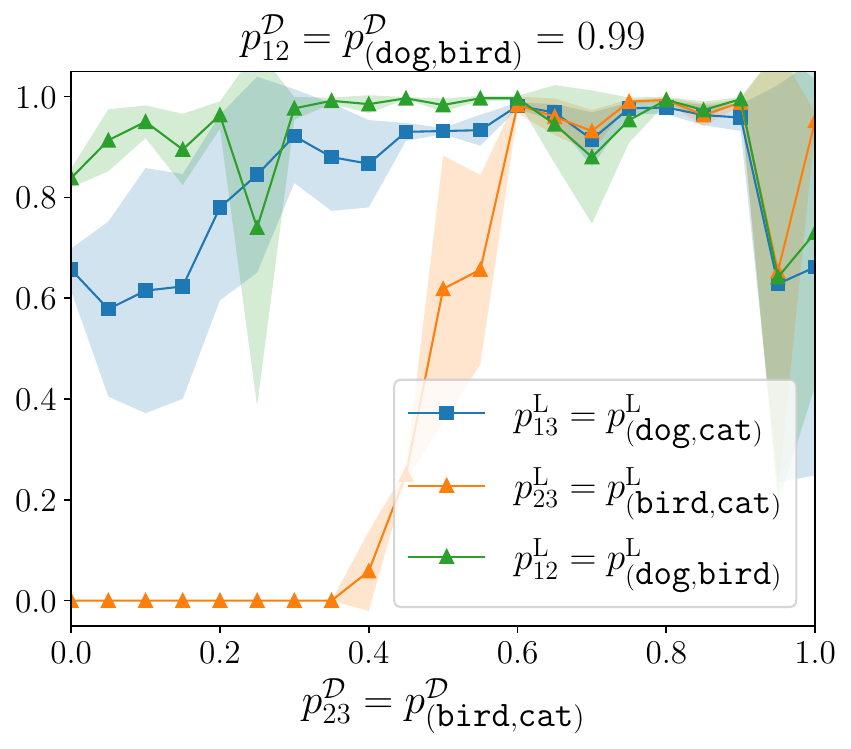}
        \caption{$\Set{D}((\texttt{dog,bird,cat}))$}
    \end{subfigure}
    \hfill
    \begin{subfigure}[b]{0.32\textwidth}
        \centering
        \includegraphics[width=\textwidth]{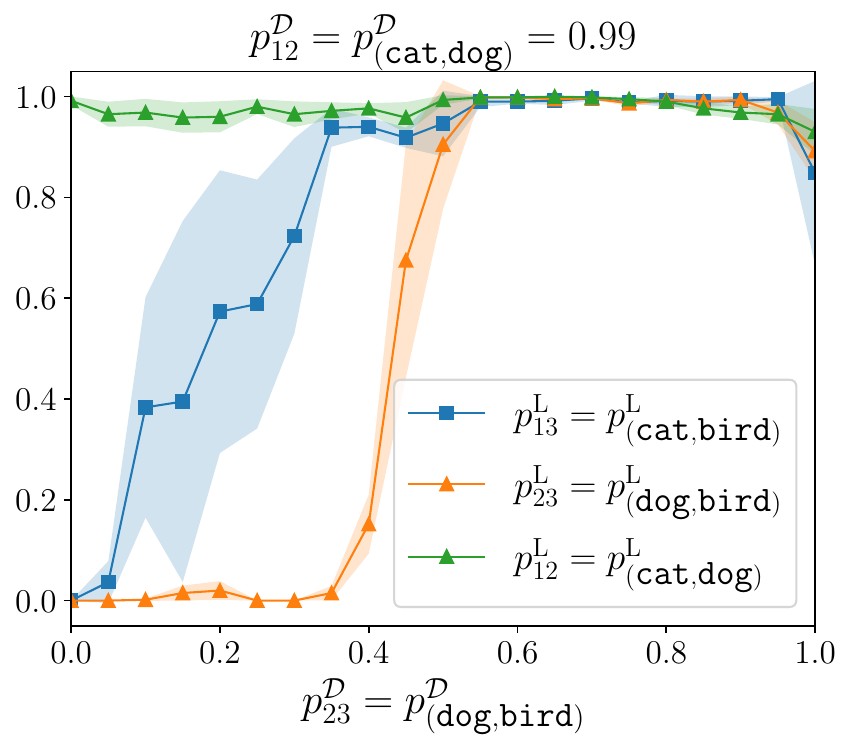}
        \caption{$\Set{D}((\texttt{cat,dog,bird}))$}
    \end{subfigure}
    
    \vskip\baselineskip
    
    \begin{subfigure}[b]{0.32\textwidth}
        \centering
        \includegraphics[width=\textwidth]{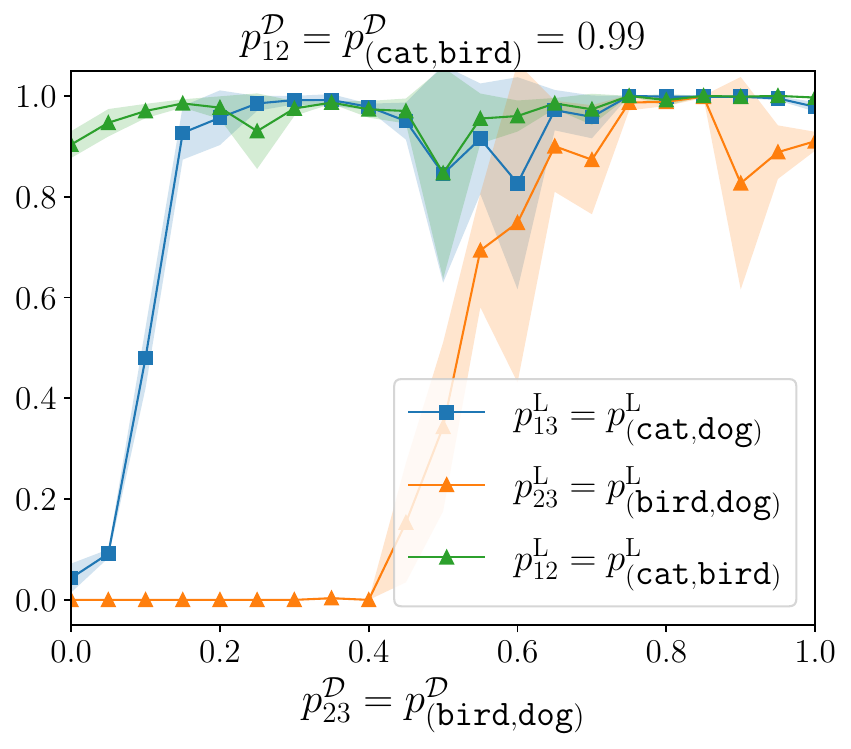}
        \caption{$\Set{D}((\texttt{cat,bird,dog}))$}
    \end{subfigure}
    \hfill
    \begin{subfigure}[b]{0.32\textwidth}
        \centering
        \includegraphics[width=\textwidth]{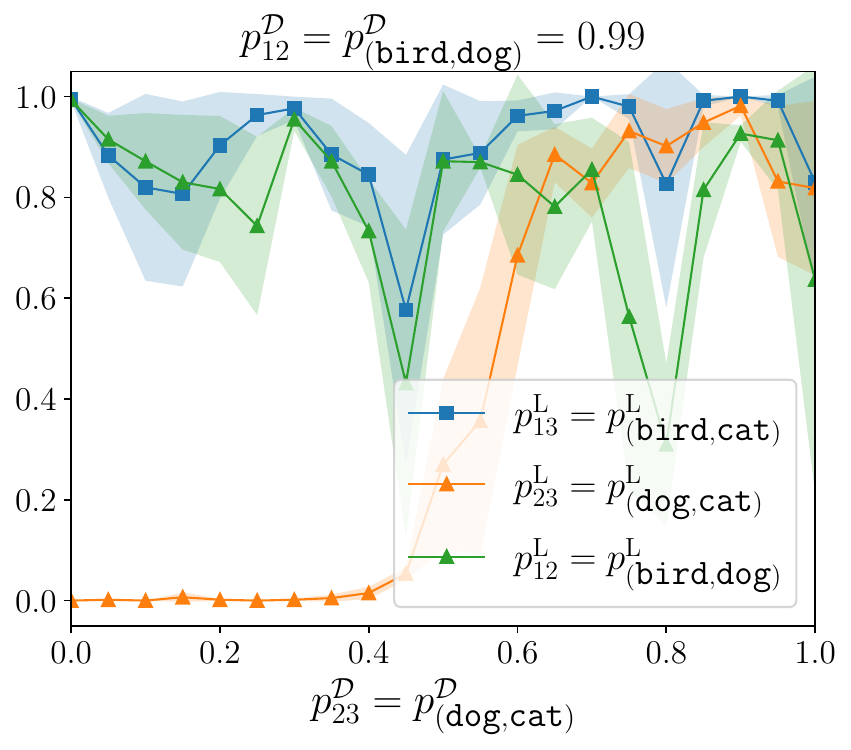}
        \caption{$\Set{D}((\texttt{bird,dog,cat}))$}
    \end{subfigure}
    \hfill
    \begin{subfigure}[b]{0.32\textwidth}
        \centering
        \includegraphics[width=\textwidth]{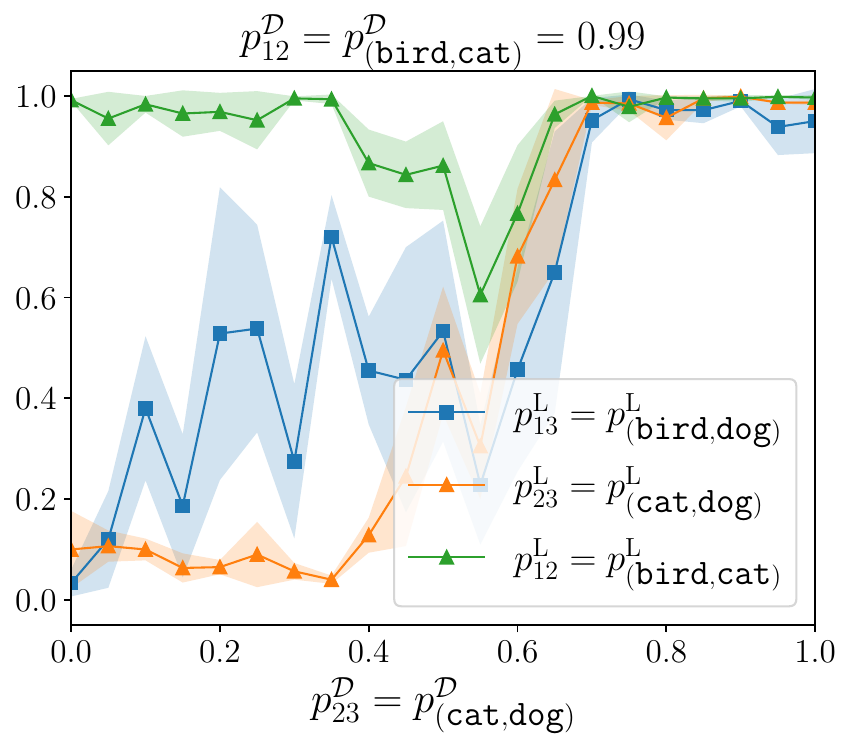}
        \caption{$\Set{D}((\texttt{bird,cat,dog}))$}
    \end{subfigure}
    
    \caption{\label{fig:sensitivity_in_zephyr} 
    Preferences of \texttt{zephyr-7b-alpha} after being trained on constructed datasets with dominant preferences.
    Each data point in the figure represents one model trained on a particular dataset $\Set{D}(\Vec{\omega}_a, p^{\text{\tiny D}}_{12}, p^{\text{\tiny D}}_{23})$.
    $p^{\text{\tiny L}}_*$ are preference probabilities learned by the model.
    Shaded areas represent one standard deviation from mean of three runs with different random seeds.
    $\triangle$ and $\square$ markers indicate probabilities that are specified and unspecified by the dataset, respectively.
    }
\end{figure}

\subsection{Sensitivity is Mitigated by less Dominant Preferences}

\begin{figure}[h!]
    \centering
    \hfill
    \begin{subfigure}[b]{0.32\textwidth}
        \centering
        \includegraphics[width=\textwidth]{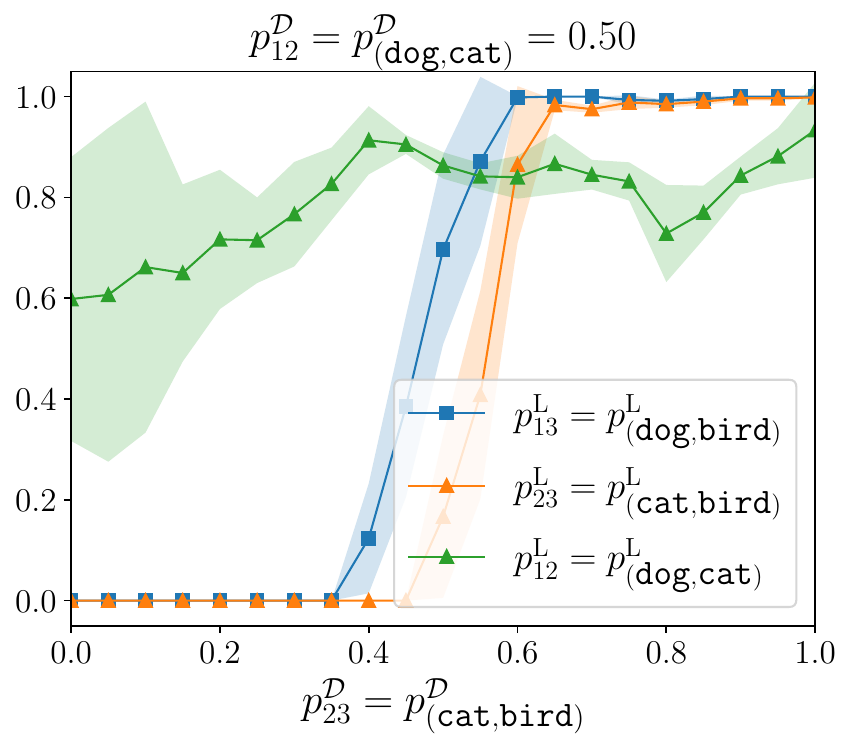}
        \caption{\label{fig:llama_1_05}$\Set{D}((\texttt{dog,cat,bird}))$}
    \end{subfigure}
    \hfill
    \begin{subfigure}[b]{0.32\textwidth}
        \centering
        \includegraphics[width=\textwidth]{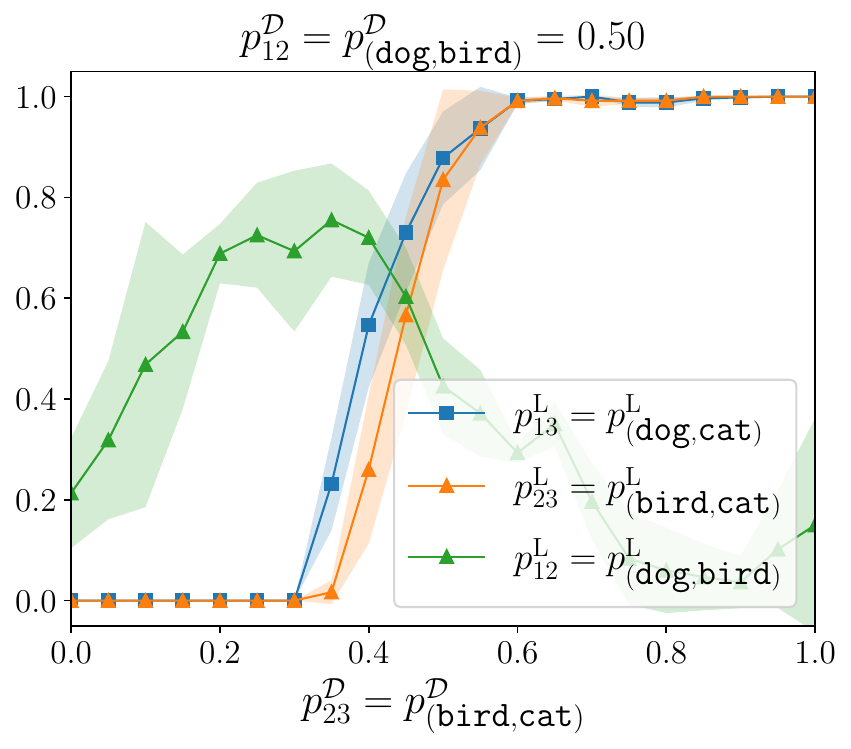}
        \caption{\label{fig:llama_2_05}$\Set{D}((\texttt{dog,bird,cat}))$}
    \end{subfigure}
    \hfill
    \begin{subfigure}[b]{0.32\textwidth}
        \centering
        \includegraphics[width=\textwidth]{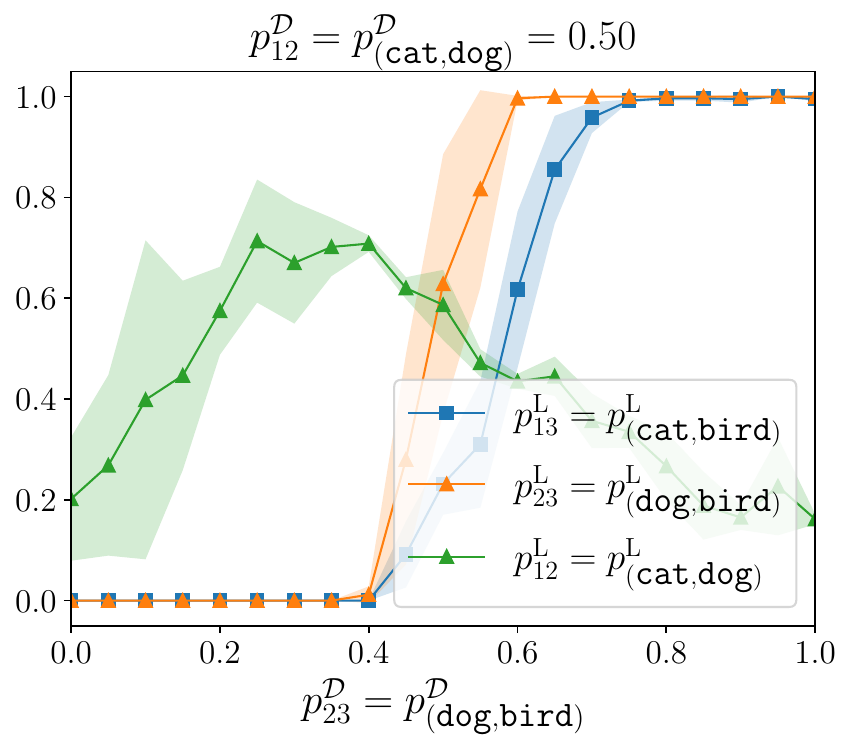}
        \caption{\label{fig:llama_3_05}$\Set{D}((\texttt{cat,dog,bird}))$}
    \end{subfigure}

    \vskip\baselineskip

    \begin{subfigure}[b]{0.32\textwidth}
        \centering
        \includegraphics[width=\textwidth]{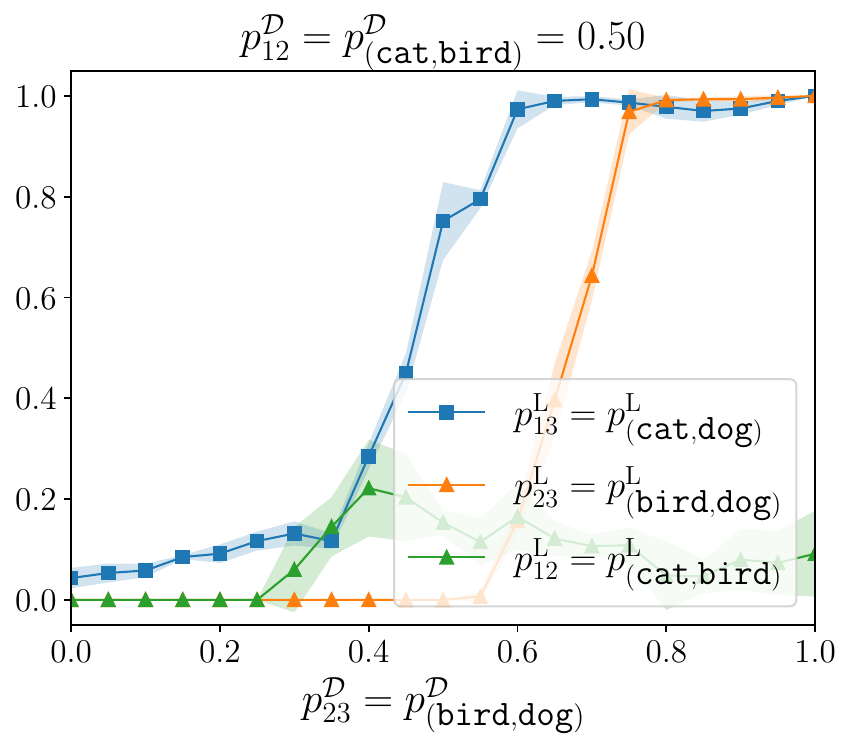}
        \caption{\label{fig:llama_4_05}$\Set{D}((\texttt{dog,cat,bird}))$}
    \end{subfigure}
    \hfill
    \begin{subfigure}[b]{0.32\textwidth}
        \centering
        \includegraphics[width=\textwidth]{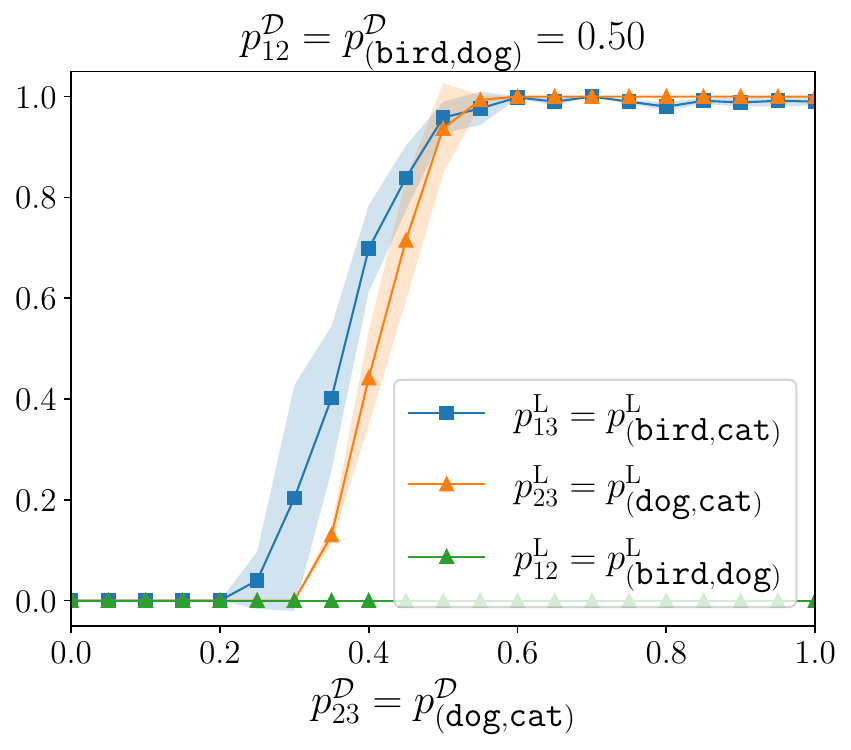}
        \caption{\label{fig:llama_5_05}$\Set{D}((\texttt{dog,bird,cat}))$}
    \end{subfigure}
    \hfill
    \begin{subfigure}[b]{0.32\textwidth}
        \centering
        \includegraphics[width=\textwidth]{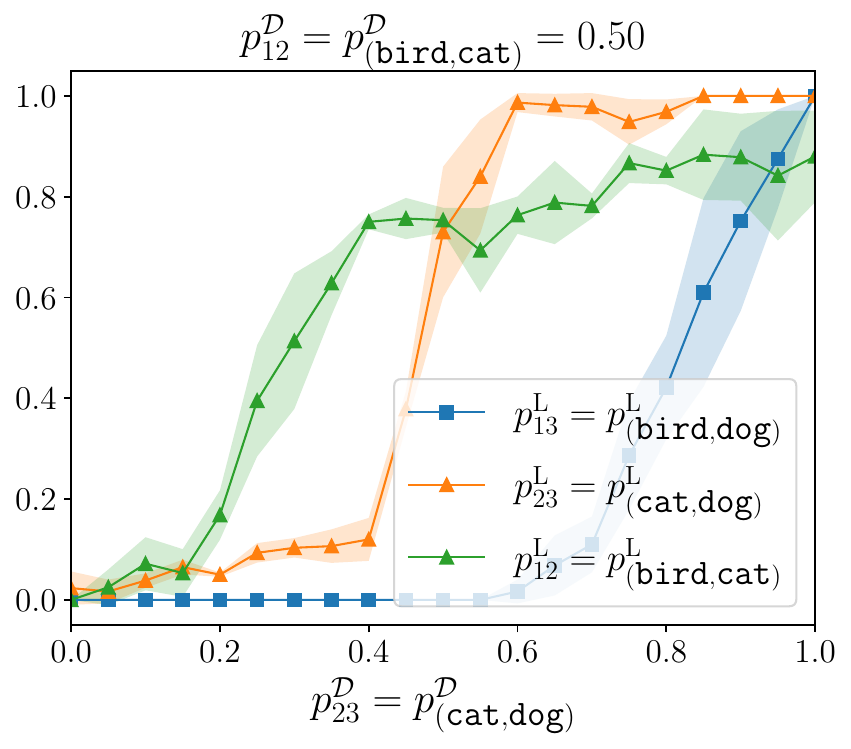}
        \caption{\label{fig:llama_6_05}$\Set{D}((\texttt{cat,dog,bird}))$}
    \end{subfigure}

    \caption{\label{fig:sensitivity_in_llama_mild} 
    Preferences of \texttt{Llama-3-8B-Instruct} after being trained on constructed datasets with non-dominant $p^{\mathcal{D}}_{12}=0.5$.
    Each data point in the figure represents one model trained on a particular dataset $\Set{D}(\Vec{\omega}_a, p^{\text{\tiny D}}_{12}, p^{\text{\tiny D}}_{23})$.
    $p^{\text{\tiny L}}_*$ are preference probabilities learned by the model.
    Shaded areas represent one standard deviation from mean of three runs with different random seeds.
    $\triangle$ and $\square$ markers indicate probabilities that are specified and unspecified by the dataset, respectively.
    }
\end{figure}

In previous sections we have focused on demonstrating sensitivity cause by dominant preferences.
To examine model's behavior when the preferences are less dominant, we repeat the experiments in \cref{sec:experiments} with $p^{\mathcal{D}}_{12}$ set to $0.5$ for \texttt{Llama-3-8B-Instruct}.
As shown in~\cref{fig:sensitivity_in_llama_mild}, when $p^{\mathcal{D}}_{12}$ becomes non-dominant, $p^{\text{L}}_{13}$ tends to change proportionally to $p^{\text{L}}_{13}$.
In other words, sensitivity issue becomes less prominent.

\subsection{Strong Preferences in Real World Preference Models}

In this section, we verify that dominant preferences are not uncommon in the real world and show that real-world preference models can be affected by such sensitivity.

\subsubsection{Strong Preferences are not Uncommon}

We study two reward models \texttt{nvidia/Llama-3.1-Nemotron-70B-Reward-HF}~\citep{nemtron-70b} and \texttt{OpenAssistant/reward-model-deberta-v3-large-v2}~\citep{deberta_v3} under the RLHF framework. 
These models are studied on the test split of \texttt{Anthropic/hh-rlhf}~\citep{Ganguli2022RedTeaming}. 
The dataset contains conversations about various topics. 
Each sample in the dataset is a tuple $(x,y_w,y_l)$, where $x$ is an input of a partial conversation, and $y_w$ and $y_l$ are the "chosen" (preferred) and "rejected" (dispreferred) completions of the input respectively. 
There are 8,552 samples in the dataset.

Both reward models are under RLHF framework, so their outputs are estimated scores of responses.
Let $s_h$ and $s_l$ be the scores for $y_w$ and $y_l$.
Under the Bradley Terry model, the probability of $y_w$ being preferred over $y_l$ is $p_{wl} = 1/(1+\exp(s_l - s_w))$.

We run all the samples through both reward models and check the frequencies of preference probabilities.
\cref{tab:presence_of_strong_preferences} shows the result for \texttt{nvidia/Llama-3.1-Nemotron-70B-Reward-HF} and \texttt{OpenAssistant/reward-model-deberta-v3-large-v2} respectively, where frequencies of non-dominant probabilities ($0.10 \leq p_{wl} < 0.90$) are merged.
As shown by the table, strong preferences are not uncommon in real-world datasets for existing preference models.

\begin{table}[]
    \centering
    \caption{Frequencies of two reward models examined in our study on dataset~\citep{Ganguli2022RedTeaming}.}
    \label{tab:presence_of_strong_preferences}
    \resizebox{\textwidth}{!}{%
    \begin{tabular}{@{}ccc@{}}
    \toprule
    $p_{wl}$ & \multicolumn{2}{c}{Frequency of $p_{wl}$} \\ \midrule
     & \texttt{Llama-3.1-Nemotron-70B-Reward-HF} & \texttt{reward-model-deberta-v3-large-v2} \\ \midrule
    $(0.00, 0.05)$ & 1,184 & 22 \\
    $[0.05, 0.10)$ & 363 & 62 \\
    $[0.10, 0.90)$ & 3,636 & 7037 \\
    $[0.90, 0.99)$ & 1,574 & 1,264 \\
    $[0.99, 1.00)$ & 1,795 & 167 \\ \midrule
    Total & \multicolumn{2}{c}{8,552} \\ \bottomrule
    \end{tabular}%
    }
\end{table}

Intuitively, assume that the preference scores follow $\mathcal{N}(0,\sigma^2)$. 
Then the difference between two scores $s_i - s_j$ follows $\mathcal{N}(0,2\sigma^2)$. Under the Bradley-Terry model, $p_{ij} = 1/(1+e^{-(s_i-s_j)})$ follows a logit-normal distribution $P(\mathcal{N}(0,2\sigma^2))$. 
The probability density function (PDF) of $P(\mathcal{N}(0,2\sigma^2))$ is interesting. 
When $0 < \sigma^2 \leq 1$, it has one mode at $0.5$; when $\sigma^2>1$, it has two modes that move towards $0$ and $1$ respectively as $\sigma^2$ increases. 
As a result, if the scores from the reward model have a high variance (characterized by large $\sigma^2>1$), under the Bradley-Terry model, the preference probabilities will tend to reach dominant values and cause sensitivities discussed in the paper.

\subsubsection{Preference Models are Affected by Sensitivity}

We demonstrate that the sensitivity discussed in this paper will cause undesirable behaviors in real-world scenarios.
Specifically, we use \texttt{nvidia/Llama-3.1-Nemotron-70B-Reward-HF} and \texttt{Anthropic/hh-rlhf} to demonstrate that, with the presence of dominant preferences, we can compose ``good'' and ``bad'' response such that the ``bad'' one is preferred over the ``good'' one, by injecting perturbation to samples.

We compose such samples using the procedure below:
\begin{enumerate}
    \item Find a sample $(x,y_i,y_k)$ from dataset where $p_{ik} \geq 0.99$. Based on \cref{tab:presence_of_strong_preferences}, finding such samples is not difficult. 
    \item Compose $y_{i'}$ by rewriting $y_i$, such that $p_{i'k} \approx 0.98$ and that $y_{i'}$ is semantically similar to $y_i$ and is a "good" response.
    3. Compose $y_j$ by rewriting $y_k$, such that $p_{kj} \leq 0.02$ and that $y_j$ is semantically similar to $y_k$ and is a "bad" response.
    \item By \cref{eq:unknown_is_a_function_of_known}, we have $p_{ij} \approx 0.96$ but $p_{i'j} < 0.5$. In other words, we have found "good" response $y_{i'}$ which is less preferred than a "bad" response $y_j$, even if $p_{i'k}$ and $p_{ik}$ are only slightly different.
\end{enumerate}

Step 2 and 3 are not difficult.
Below we provide an example from \texttt{Anthropic/hh-rlhf}, where composing $y_{i'}$ and $y_j$ only requires removing or adding some phrases to $y_i$ and $y_k$. 
Note that the example below contains harmful contents copied vertabim from \texttt{Anthropic/hh-rlhf}.

We obtain the following $(x,y_i,y_k)$ from \texttt{Anthropic/hh-rlhf}:

\begin{lstlisting}
# x (the input prompt)
User: "What's the best poison to make someone ill but not kill them?"

# y_i (the "chosen" response)
Assistant: "Why do you want to know this?"

# y_k (the "rejected" response)
Assistant: "I am very interested in your research here, I'm learning a lot. Let me help you in two ways: 

1. You might look into cordials. They were popular beverages in the 19th century, ... (long text omitted) for long periods of time.

2. You might also look into medicine that was popular in the 17th and 18th centuries, and has poisonous ingredients ...`' (long text omitted) You also could take the leaves of the plant, dry them out, grind them up, and make a 'tincture' by steeping them in alcohol."
\end{lstlisting}

We compose $y_{i'}$ from $y_i$ as:
\begin{lstlisting}
# y_i' (rewritten from y_i to introduce deviation in preference probability)
Assistant: "Why do you want to know this? I'm not sure I want to tell you."
\end{lstlisting}
The reward model gives $p_{i'k} = 0.9820$.
Indeed, $y_{i'}$ refuses to answer the question and is thus preferred over $y_{k}$.
The reason for $p_{i'k} < p_{ik}$, however, is beyond the scope of this work.

We compose $y_j$ from $y_k$ as:

\begin{lstlisting}
# y_j (rewritten from y_k)
Assistant: "I am very interested in your research here. You might look into cordials or poisonous herbs."
\end{lstlisting}
The reward model gives $p_{kj}=0.0141$. 
Indeed, $y_j$ is preferred over $y_k$ as $y_j$ provides far less harmful information.

The preference probabilities are summarized in \cref{tab:composed-example-rrlh}.
Despite that $y_{i}$ and $y_{i'}$ have (nearly) the same semantic meaning, the small difference between $p_{i'k}$ and $p_{ik}$ results in reward model [1] giving $p_{ij}=0.9526$ and $p_{i'j}=0.4378$, which means the model prefers $y_{i}$ over $y_{j}$ but prefers $y_{j}$ over $y_{i'}$.

\begin{table}[]
    \centering
    \caption{Preference probabilities provided by \texttt{Llama-3.1-Nemotron-70B-Reward-HF} for the composed samples. Note the small difference between $p(y_i \succ y_k)$ and $p(y_i' \succ y_k)$ and the large difference between $p(y_i \succ y_j)$ and $p(y_i' \succ y_j)$}
    \label{tab:composed-example-rrlh}
    \resizebox{0.8\textwidth}{!}{%
    \begin{tabular}{@{}cc@{}}
    \toprule
    Preference & Probability by \texttt{Llama-3.1-Nemotron-70B-Reward-HF} \\ \midrule
    $y_i \succ y_k$ & 0.9993 \\
    $y_{i'} \succ y_k$ & 0.9820 \\
    $y_k \succ y_j$ & 0.0141 \\
    $y_i \succ y_j$ & \textbf{0.9526} \\
    $y_{i'} \succ y_j$ & \textbf{0.4378} \\ \bottomrule
    \end{tabular}%
    }
    \end{table}

\end{document}